\documentclass{article}

\usepackage{nips15submit_e,times}
\nipsfinalcopy

\usepackage[utf8]{inputenc}
\usepackage[T1]{fontenc}
\usepackage{hyperref}
\usepackage{url}
\usepackage{booktabs}
\usepackage{amsfonts,amsmath,amssymb,amsthm}
\usepackage{bm}
\usepackage{nicefrac}
\usepackage[expansion=false,protrusion=false]{microtype}
\usepackage{xcolor}
\usepackage{graphicx}
\usepackage{multirow}
\usepackage{float}
\usepackage{subcaption}
\usepackage[numbers]{natbib}

\newtheorem{theorem}{Theorem}
\newtheorem{lemma}{Lemma}

\theoremstyle{definition}

\theoremstyle{remark}

\newcommand{\bbR}{\mathbb{R}}
\newcommand{\bbS}{\mathbb{S}}
\newcommand{\bbB}{\mathbb{B}}
\newcommand{\bbZ}{\mathbb{Z}}

\newcommand{\half}{\tfrac{1}{2}}

\title{FibQuant: Universal Vector Quantization for Random-Access 
KV-Cache Compression}

\author{Namyoon Lee \\
POSTECH \\
\texttt{nylee@postech.ac.kr}
\And
Yongjune Kim \\
POSTECH \\
\texttt{yongjune@postech.ac.kr}}

\begin{document}
\maketitle

%======================================================================
\begin{abstract}
Long-context inference is increasingly a memory-traffic problem.
The culprit is the key--value (KV) cache: it grows with context
length, batch size, layers, and heads, and it is read at every
decoding step.  A useful cache codec is therefore not merely a
quantizer.  It is also a data structure: every cached key and value
must occupy a fixed address and be reconstructed independently, so
decompression can be fused with attention.  Rotation-based scalar
codecs meet this systems constraint by storing a norm, applying a
shared random rotation, and quantizing one coordinate at a time.
They are universal and random-access, but they discard the geometry
created by the normalization step.  After a Haar rotation, a block
of $k$ consecutive coordinates is not a product source; it is a
spherical-Beta source on the unit ball.  We introduce
\textsc{FibQuant}, a universal fixed-rate vector quantizer that
keeps the same normalize--rotate--store interface while replacing
scalar tables by a shared radial--angular codebook matched to this
canonical source.  The codebook combines Beta-quantile radii,
Fibonacci\,/\,Roberts--Kronecker quasi-uniform directions, and
multi-restart Lloyd--Max refinement.  We prove that the resulting
vector code strictly improves on its scalar product specialization
at matched rate, with a high-rate gain that separates into a
cell-shaping factor and a density-matching factor.  The same
construction gives a dense rate axis, including fractional-bit and
sub-one-bit operating points, without calibration or variable-length
addresses.  On GPT-2 small KV caches, \textsc{FibQuant} traces a
memory--fidelity frontier from $5\times$ compression at $0.99$
attention cosine similarity to $34\times$ at $0.95$.  End-to-end on
TinyLlama-1.1B, it is within $0.10$ perplexity of fp16 at $4\times$
compression and has $3.6\times$ lower perplexity than scalar
\textsc{TurboQuant} at $b = 2$ ($8\times$ compression), where scalar
random-access quantization begins to fail.
\end{abstract}

%======================================================================
\section{Introduction}
\label{sec:intro}

The key--value (KV) cache is the state carried by an autoregressive
transformer.  Its size grows linearly with context length, batch
size, layer count, and head count; its contents are accessed at
every decoding step.  In long-context serving, the cache is therefore
not an auxiliary buffer.  It is often the object that determines
memory bandwidth, batch size, and latency, and in extreme regimes it
can exceed the model weights itself~\citep{jiang2025kvcomp}.  This
pressure has led to two-bit and lower-precision cache quantization
methods such as \textsc{KIVI} and
\textsc{KVQuant}~\citep{liu2024kivi, hooper2024kvquant}.

The cache, however, imposes a constraint that is absent from
ordinary source coding.  A deployable cache codec must be
\emph{random-access}: the representation of token $t$ must live at
a fixed address, and the corresponding key or value must be decoded
independently by simple address arithmetic.  Otherwise decompression
cannot be fused with the attention kernel~\citep{jiang2025kvcomp}.
This single constraint explains why fixed-rate quantizers dominate
the source-coding layer of KV-cache
compression~\citep{liu2024kivi, hooper2024kvquant,
zandieh2025turboquant, gao2024rabitq}.  It rules out many
statistically attractive variable-length or context-dependent
representations before one even asks about distortion.

The cleanest representative of this random-access family is
\textsc{TurboQuant}~\citep{zandieh2025turboquant}.  It stores the
vector norm, applies a shared random orthogonal rotation, and
quantizes each rotated coordinate with the same scalar Lloyd--Max
table.  This is the right \emph{interface}: source-agnostic,
fixed-rate, and easy to fuse.  But it is the wrong \emph{geometry}.
Once a vector has been normalized and Haar-rotated, a block of $k$
consecutive coordinates lies on the unit ball with a specific radial
law and a uniform angular component.  The coordinates are not an
independent product of shifted-Beta marginals.  A scalar code sees
one coordinate at a time; the source seen by the cache is
intrinsically vectorial.

\paragraph{The sub-one-bit gap.}
In production, the admissible cache rate is not chosen by an elegant
integer-bit quantizer.  It is set by VRAM, latency, context length,
and batch size.  When this budget falls below roughly one bit per
coordinate, current universal, calibration-free, random-access
codecs cease to operate: \textsc{TurboQuant} bottoms out at one bit,
\textsc{RaBitQ}~\citep{gao2024rabitq} is a one-bit signed code, and
\textsc{KIVI}\,/\,\textsc{KVQuant} primarily target two bits or
above.  The remaining alternatives either rely on calibration, as in
low-rank projection methods such as \textsc{Palu}~\citep{chang2024palu},
or alter the sequence itself through eviction or retention policies
such as \textsc{StreamingLLM} and
\textsc{H2O}~\citep{xiao2024streamingllm, zhang2023h2o}.  The
missing object is a fixed-rate, calibration-free, random-access
codec with a continuously tunable rate axis below one bit per
coordinate.

\paragraph{Contribution.}
We ask a simple question: \emph{what is the canonical source induced
by the random-access normalize--rotate interface?}  The answer is
the spherical-Beta law $f_{d, k}$ on the unit ball $\bbB^k$.  Once
this law is identified, the rest of the codec is classical.  The
radius is matched by Bennett--Gersho companding, the direction by
quasi-uniform spherical point sets, and the finite-rate codebook by
Lloyd--Max refinement.  The result is \textsc{FibQuant}: a vector
quantizer that keeps the serving interface of scalar rotation codes
but recovers the shaping and density-matching gains that scalar
products leave on the table.

Our contributions are as follows.
\begin{itemize}
\item \textbf{Canonical source.}  We prove that
norm-then-Haar-rotation maps every nonzero input distribution to the
same $k$-block marginal $f_{d, k}$ on $\bbB^k$
(Theorem~\ref{thm:univ}).  Thus one shared offline codebook is the
correct universal object for layers, heads, prompts, and models.
\item \textbf{Vector advantage.}  We show that the best $k$-vector
code strictly dominates the scalar product specialization at matched
rate, and that the high-rate gain factors into a cell-shaping term
and a density-matching term (Theorem~\ref{thm:dominance}).
\item \textbf{Dense rate axis.}  Because the rate is
$b = (\log_2 N)/k$, \textsc{FibQuant} realizes fractional-bit and
sub-one-bit operating points while preserving fixed addresses and
independent decoding (Figure~\ref{fig:tq_vs_fq_d128}).
\item \textbf{KV-cache evidence.}  On GPT-2 small,
\textsc{FibQuant} reaches $34.1\times$ compression at $0.946$
attention-output cosine similarity, the only universal random-access
method in our comparison beyond $10\times$ compression
(Figure~\ref{fig:gpt2_pareto}).  On TinyLlama-1.1B, it strictly
dominates per-token \textsc{Int} and scalar \textsc{TurboQuant} at
matched integer rates on WikiText-103 perplexity and HellaSwag
accuracy (Sec.~\ref{sec:llm_rd}).
\end{itemize}

%======================================================================
\section{Background and Related Work}
\label{sec:related}

\textsc{FibQuant} is a source-coding layer for a specific systems
interface: fixed-rate random access to cached keys and values.  It
is therefore complementary to eviction, low-rank projection, and
fused-kernel systems work.  We separate the related work according
to this role.

\paragraph{KV-cache compression.}
Existing approaches reduce the KV cache along different axes.
Quantizers reduce precision, from outlier-aware matrix quantization
such as \textsc{LLM.int8()}~\citep{dettmers2022llmint8} to
cache-specific methods such as \textsc{KIVI}, \textsc{KVQuant},
\textsc{Gear}, and \textsc{Qjl}~\citep{liu2024kivi, hooper2024kvquant,
kang2024gear, zandieh2025qjl}.  Token-selection methods reduce the
sequence dimension by keeping attention sinks or heavy
hitters~\citep{xiao2024streamingllm, zhang2023h2o, li2024snapkv,
cai2024pyramidkv}.  Low-rank methods such as \textsc{Palu} exploit
calibrated head-specific subspaces~\citep{chang2024palu}.  System
frameworks such as \textsc{KVComp} focus on making compression
compatible with the memory hierarchy and the attention
kernel~\citep{jiang2025kvcomp}.  These methods attack different
bottlenecks.  Our question is narrower and more primitive:
\emph{given a random-access fixed-rate cache slot, what source code
should occupy it?}

\paragraph{Rotation-then-quantize codes.}
The closest prior art is the family of source-agnostic rotation
codes.  \textsc{TurboQuant} stores a norm, applies a shared random
rotation, and quantizes each rotated coordinate using a Lloyd--Max
table matched to the shifted-Beta scalar
marginal~\citep{zandieh2025turboquant}.  \textsc{RaBitQ} reaches an
analogous universality principle for approximate nearest-neighbor
search through a one-bit signed code with a sharp distance
bound~\citep{gao2024rabitq}; a recent comparison studies these two
views side by side~\citep{revisiting2026rabitq_tq}.  These methods
identify the correct deployment interface.  The difference is that
they stop at scalar marginals.  \textsc{FibQuant} applies the same
interface to $k$-blocks, where the induced law is the
spherical-Beta vector distribution rather than a product of scalar
marginals.  This single change yields a dense rate axis and the
vector-quantization gains proved in Sec.~\ref{sec:theory}.

\paragraph{Spherical vector quantization.}
The codebook geometry uses classical ingredients.  The planar
Fibonacci spiral at $k = 2$ is related to the Golden Quantizer for a
complex Gaussian source~\citep{larsson2018golden}, Fibonacci-lattice
color quantization~\citep{mojsilovic2001color}, and sunflower
antenna arrays~\citep{vigano2009sunflower}.  Higher-dimensional
directions use Fibonacci-sphere and Roberts--Kronecker
low-discrepancy
constructions~\citep{gonzalez2010fibonacci, saff1997sphere,
roberts2018quasirandom, kuipers1974uniform, niederreiter1992qmc},
and the refinement step is the Lloyd--Max\,/\,Gersho theory of
vector quantization~\citep{max1960quantizing, lloyd1982pcm,
gersho1979asymptotic, gray1998quantization, conway1999sphere}.  What
is new here is not any one geometric primitive.  It is the matching
of those primitives to the spherical-Beta source forced by
random-access KV-cache compression.

%======================================================================
\section{Model and Codec}
\label{sec:method}

\begin{figure}[t]
\centering
\includegraphics[width=0.81\linewidth]{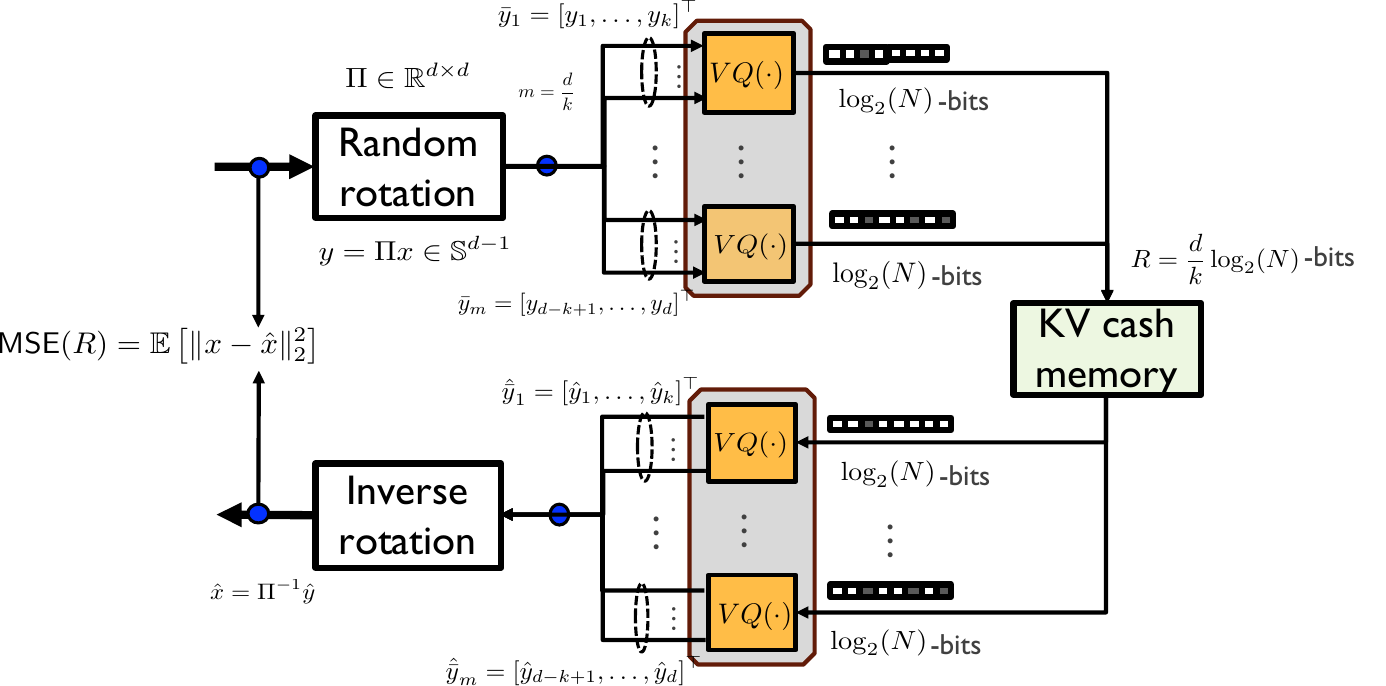}
\caption{\textbf{\textsc{FibQuant} encoder--decoder pipeline.}  A
cached vector $x \in \bbR^d$ is split into a scalar norm header
$\nu = \|x\|_2$ and a unit direction $\Pi x / \nu$, where
$\Pi \in \bbR^{d \times d}$ is a single Haar-random orthogonal
matrix shared across layers, heads, prompts and tokens.  The unit
direction is partitioned into $d/k$ blocks of $k$ consecutive
coordinates, each falling on the unit ball $\bbB^k$ with marginal
density $f_{d, k}$ (Lemma~\ref{lem:sphbeta}).  Each block is
encoded to one of $N$ indices by nearest-codeword lookup against
the shared offline codebook $\mathcal{C} = \{c_n\}_{n = 1}^{N}$
(Eq.~\eqref{eq:encode}); the decoder reverses the lookup and the
rotation to recover $\hat x$.  The output bitstream is a fixed-rate
$(d \log_2 N)/k$-bit payload plus one fp16 norm per cached vector,
so token slots are at affine offsets and any past key/value can be
recovered independently---the random-access requirement that
distinguishes deployable cache codecs from variable-length
alternatives.  The only change from \textsc{TurboQuant}'s pipeline
is the replacement of the per-coordinate scalar table by one
$k$-dimensional vector quantizer.}
\label{fig:system}
\end{figure}

\paragraph{Overview.}
The codec has three parts.  First, every cached vector is
normalized and rotated by a shared Haar-random orthogonal matrix.
This is the same random-access interface used by scalar rotation
codes: the norm is a small side header, the payload is fixed length,
and every token can be decoded independently.  Second, the rotated
unit vector is partitioned into $k$-dimensional blocks.
Lemma~\ref{lem:sphbeta} shows that each block has the same
spherical-Beta density $f_{d, k}$ on the unit ball, independent of
the original KV distribution.  Third, a shared offline codebook
$\mathcal{C} = \{r_n u_n\}_{n = 1}^{N}$ is built for this density:
Beta-quantile radii $r_n$, quasi-uniform directions $u_n$, and a
multi-restart Lloyd--Max polish.  The encoder performs
nearest-codeword lookup per block; the decoder performs table lookup
and inverse rotation.

The point is deliberately conservative.  \textsc{FibQuant} changes
the statistical object being quantized, not the serving contract.
The bitstream remains fixed-rate and randomly addressable; only the
scalar coordinate table is replaced by a vector codebook matched to
the law that the interface itself creates.

The section proceeds in three steps: the encoder--decoder pair
(Sec.~\ref{sec:rand_access}), the canonical source law
(Sec.~\ref{sec:canonical_source}), and the radial--angular codebook
construction (Sec.~\ref{sec:beta_quantile}).

\subsection{Random-access vector encoding}
\label{sec:rand_access}

Fix one attention head of width $d$, a shared orthogonal matrix
$\Pi \in \bbR^{d \times d}$, and a codebook
$\mathcal{C} = \{c_1, \ldots, c_N\} \subset \bbB^k$ with $k \mid d$.
For a nonzero cached vector $x \in \bbR^d$, the encoder stores the
fp16 norm header $\nu = \|x\|_2$ and quantizes the rotated unit
vector $\Pi x / \nu$.  Write its block decomposition as $\Pi x / \nu
= (y^{(1)}, \ldots, y^{(d/k)})$, with $y^{(m)} \in \bbB^k$.  The
stored index for block $m$ is
\begin{equation}
i_m \;=\; \arg\min_{1 \leq j \leq N} \|y^{(m)} - c_j\|_2^2,
\qquad m = 1, \ldots, d/k.
\label{eq:encode}
\end{equation}
The decoder sets $\hat y^{(m)} = c_{i_m}$ and returns
$\hat x = \nu \Pi^\top \hat y$.  The payload rate is
$b = (\log_2 N)/k$ bits per coordinate, identical for every vector
and every token.  Hence token addresses are affine in the time
index, and random access is preserved by construction.

\paragraph{Loss functions.}
We track two losses.  The intrinsic source-coding loss is
\begin{equation}
\mathcal{L}_{\mathrm{mse}}(\mathcal{C})
\;=\;
\mathbb{E}\sum_{m = 1}^{d/k}\,\min_{j} \|Y^{(m)} - c_j\|_2^2,
\end{equation}
where $Y^{(m)}$ is distributed according to the canonical law
$f_{d, k}$ derived in Sec.~\ref{sec:canonical_source}.  The
downstream metric is attention-output cosine similarity,
\begin{equation}
\mathcal{L}_{\mathrm{attn}}(\mathcal{C})
\;=\;
\mathbb{E}_{\ell, h, q}\,
\frac{\langle o_{\ell, h}(q),\, \hat o_{\ell, h}(q)\rangle}
     {\|o_{\ell, h}(q)\|\, \|\hat o_{\ell, h}(q)\|},
\qquad
o_{\ell, h}(q)
\;=\;
\mathrm{softmax}\!\left(\tfrac{q K_{\ell, h}^\top}{\sqrt d}\right) V_{\ell, h},
\label{eq:lattn}
\end{equation}
with $\hat o_{\ell, h}$ computed from the reconstructed
$\hat K_{\ell, h}, \hat V_{\ell, h}$.  A task-weighted or
Mahalanobis Lloyd objective can be used offline by replacing
$\|x - c\|_2^2$ with $(x - c)^\top M_h (x - c)$; the random-access
bitstream and decoder are unchanged.

\subsection{The canonical vector source}
\label{sec:canonical_source}

The normalization removes scale; the Haar rotation removes
orientation.  Consequently the distribution seen by the block
quantizer is universal.  Let $\bar x = x / \|x\|$.  Conditional on
any fixed direction $\bar x = v$, $\Pi v$ is Haar-uniform on the
unit sphere $\bbS^{d - 1}$ by rotational invariance.  Averaging over
$v$ does not change this law.  Thus every input distribution with
$\Pr(X = 0) = 0$ induces the same rotated unit-vector law.  The
$k$-block marginal is the following spherical-Beta density.

\begin{lemma}[Spherical-Beta vector marginal]
\label{lem:sphbeta}
Let $U \sim \mathrm{Unif}(\bbS^{d - 1})$ and $X = U_{1:k}$ for
$1 \leq k < d$.  Then $X$ has density
$f_{d, k}(x) = C_{d, k}(1 - \|x\|^2)^{(d - k - 2)/2}\,\mathbf{1}\{\|x\|
\leq 1\}$ on $\bbB^k$, with
\begin{equation}
R^2 \sim \mathrm{Beta}\!\left(k/2,\,(d - k)/2\right),
\quad
\mathbb{E} R^2 = k/d,
\quad
\mathrm{Var}(R^2) = \frac{2 k(d - k)}{d^2 (d + 2)},
\label{eq:radial_law}
\end{equation}
and $X / R$ uniform on $\bbS^{k - 1}$, independent of $R$.
\end{lemma}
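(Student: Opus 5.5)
We will realize $U$ through a Gaussian vector and read off every claim from independent Gamma/Beta variables. Let $G = (G_1, \ldots, G_d)$ have i.i.d.\ $\mathcal{N}(0,1)$ entries and set $U = G/\|G\|$. Since the law of $G$ is rotation invariant and $\|G\| > 0$ almost surely, $U \sim \mathrm{Unif}(\bbS^{d-1})$. Write $A = G_{1:k}$ and $B = G_{k+1:d}$; these are independent. Then
\[
X = U_{1:k} = \frac{A}{\sqrt{\|A\|^2 + \|B\|^2}},
\]
and $\|A\|^2 \sim \chi^2_k$, $\|B\|^2 \sim \chi^2_{d-k}$ are independent. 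Moreover $A/\|A\|$ is uniform on $\bbS^{k-1}$ and independent of $(\|A\|, B)$.

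\textbf{Radial law and angular independence.} We have $R^2 = \|X\|^2 = \|A\|^2/(\|A\|^2 + \|B\|^2)$. By the standard Gamma--Beta ratio fact, this is $\mathrm{Beta}(k/2, (d-k)/2)$. Also $X/R = A/\|A\|$. Because $R$ is a function of $(\|A\|, \|B\|)$, it is independent of $A/\|A\|$. This gives the independence claim and the uniformity of $X/R$ on $\bbS^{k-1}$. The moments then follow from the Beta formulas with $\alpha = k/2$ and $\beta = (d-k)/2$. The mean is $\alpha/(\alpha+\beta) = k/d$. The variance is
\[
\frac{\alpha\beta}{(\alpha+\beta)^2(\alpha+\beta+1)}
= \frac{(k/2)((d-k)/2)}{(d/2)^2\,(d/2+1)}
= \frac{2k(d-k)}{d^2(d+2)}.
\]

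\textbf{Density.} Since $X = R\,\Theta$ with $\Theta$ uniform on $\bbS^{k-1}$ and independent of $R$, $X$ is rotation invariant. Its density is therefore $g(\|x\|)$. Polar coordinates in $\bbR^k$ give $g(r)\,|\bbS^{k-1}|\,r^{k-1} = f_R(r)$. From the Beta law of $R^2$, the density of $R$ is
\[
f_R(r) = 2r\,\frac{r^{k-2}(1-r^2)^{(d-k-2)/2}}{B(k/2,(d-k)/2)}, \qquad 0 < r < 1 .
\]
Dividing by $|\bbS^{k-1}|\,r^{k-1}$ yields $g(r) = C_{d,k}(1-r^2)^{(d-k-2)/2}$ with
\[
C_{d,k} = \frac{2}{|\bbS^{k-1}|\,B(k/2,(d-k)/2)} = \frac{\Gamma(d/2)}{\pi^{k/2}\,\Gamma((d-k)/2)},
\]
using $|\bbS^{k-1}| = 2\pi^{k/2}/\Gamma(k/2)$. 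The support is $\bbB^k$. The hypothesis $k < d$ ensures $\beta > 0$, so the Beta law is proper. When $k = d-1$ the exponent is $-1/2$, which is still integrable.

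\textbf{Main obstacle.} The only step needing care is the independence of $A/\|A\|$ from $(\|A\|, B)$. This follows from the polar factorization of the standard Gaussian on $\bbR^k$ (its density depends only on the radius), combined with the independence of $A$ and $B$. The Gamma--Beta ratio fact can be cited, or verified by the change of variables $(s, t) \mapsto (s/(s+t),\, s+t)$ applied to the product of the two Gamma densities.
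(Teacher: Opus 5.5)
Your proof is correct, but it runs in the opposite direction from the paper's. The paper introduces $G/\|G\|$ only to define $U$ and then argues geometrically. For fixed $x \in \bbB^k$, the remaining $d-k$ coordinates range over a sphere of radius $\sqrt{1-\|x\|^2}$, which gives $f_{d,k}$ directly. The constant comes from the ball integral $\int_{\bbB^k}(1-\|x\|^2)^{\alpha}\,dx$. The radial law is then read off by passing to polar coordinates and substituting $z=r^2$. Uniformity and independence of the direction are left implicit in ``the angular measure factors out,'' and the moments are not written out.

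You instead get the Beta law of $R^2$ and the independence of $X/R = A/\|A\|$ from $R$ first, via the Gamma--Beta ratio and the polar factorization of the Gaussian. Only then do you recover the density from $g(r) = f_R(r)/(|\bbS^{k-1}|\,r^{k-1})$. Your constant $\Gamma(d/2)/(\pi^{k/2}\Gamma((d-k)/2))$ agrees with the paper's.

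Your route is more rigorous exactly where the slicing argument is delicate. The bare surface area of a radius-$\rho$ copy of $\bbS^{d-k-1}$ scales as $\rho^{d-k-1}$, not $\rho^{d-k-2}$. The missing factor $\rho^{-1}$ is the coarea Jacobian of the projection $\bbS^{d-1}\to\bbB^k$. You can see it already at $k=d-1$, where the fibre is two points yet the density is $(1-\|x\|^2)^{-1/2}$. The paper's exponent is correct but absorbs this Jacobian without comment, whereas your argument never needs it. You also verify the independence claim, the mean and variance, and integrability at $k=d-1$ explicitly.

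The paper's route, in exchange, is shorter and treats the density as the primary object, with the normalizing constant coming out of a single closed-form integral.
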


At $k = 1$, Lemma~\ref{lem:sphbeta} reduces to the shifted-Beta
scalar marginal used by scalar \textsc{TurboQuant}.  For $k > 1$,
however, the joint law is not the product of those marginals.  This
distinction is the whole source of the vector gain: scalar
quantization is matched to one-coordinate projections, while the
cache interface naturally exposes blocks with radial--angular
structure.

Two consequences guide the design.
\begin{itemize}
\item[\textbf{(P1)}] \emph{Stationarity after rotation.} Every block
$Y^{(m)}$ has density $f_{d, k}$ on $\bbB^k$, independent of layer,
head, prompt, token, and the original KV distribution.  A single
shared codebook is therefore a universal offline object.
\item[\textbf{(P2)}] \emph{Concentration on a shell.} Since
$\mathrm{Var}(R^2) = O(d^{-2})$, high-dimensional mass concentrates
near $\bar R = \sqrt{k/d}$.  At large $d$, the problem becomes
primarily one of placing directions on $\bbS^{k - 1}$; at moderate
$d$, the residual radial spread is captured by Beta-quantile radii.
\end{itemize}

\subsection{Radial--angular codebook construction}
\label{sec:beta_quantile}

Lemma~\ref{lem:sphbeta} factorizes the design problem.  Each
codeword is written as $c_n = r_n u_n$, with radius $r_n \in [0, 1]$
and direction $u_n \in \bbS^{k - 1}$.  The radii match the radial
density of the spherical-Beta source; the directions approximate a
uniform spherical packing; Lloyd--Max refinement then adapts the
deterministic initialization to the finite value of $N$.

\paragraph{Radii.}
Bennett's high-rate companding rule, in its vector-quantization
form due to Gersho~\citep{gersho1979asymptotic}, places codewords
with density proportional to $f_{d, k}^{\,k/(k + 2)}$ for MSE.
Specializing this rule to the radius gives a Beta law with shape
$(k/2, \beta_{d, k})$.  Midpoint quantiles $q_n = (n - \half)/N$
yield
\begin{equation}
r_n \;=\; \sqrt{\mathrm{BetaInv}\!\left(q_n;\;\tfrac{k}{2},\;\beta_{d, k}\right)},
\qquad
\beta_{d, k} \;=\; \frac{k}{k + 2}\cdot\frac{d - k - 2}{2} + 1.
\label{eq:radii_main}
\end{equation}
For $k = 2$, this inverse is closed form:
\begin{equation}
r_n \;=\; \sqrt{1 - (1 - q_n)^{4/d}}.
\label{eq:radii_k2}
\end{equation}

\paragraph{Directions.}
The direction set $\{u_n\}_{n = 1}^{N} \subset \bbS^{k - 1}$ is
chosen by geometry, not by calibration data:
\begin{itemize}
\item $k = 2$: a planar Fibonacci spiral, $u_n = (\cos\theta_n,
\sin\theta_n)$ with $\theta_n = 2\pi(n - 1)\theta_g$ and
$\theta_g = 1 - 1/\varphi$.
\item $k = 3$: a Fibonacci sphere with equal-area latitude bands
and golden-angle azimuth; the multi-shell variant is given in
Appendix~\ref{app:k3_construction}.
\item $k \geq 4$: a Roberts--Kronecker rank-one sequence
$\xi_{n, j} = \{(n - \half)\, \phi_k^{-j}\}$, where $\phi_k$ is the
positive root of $\phi^{k + 1} = \phi + 1$, mapped through
$\Phi^{-1}$ and projected onto $\bbS^{k - 1}$.
\end{itemize}

\paragraph{Lloyd--Max refinement.}
The deterministic initialization $c_n^{(0)} = r_n u_n$ is refined on
samples from $f_{d, k}$.  Each restart applies a random orthogonal
rotation to the initial codebook, alternates nearest-neighbor
assignment and centroid update, repairs empty cells by splitting
high-distortion cells, and keeps the codebook with the lowest
training MSE.  The full procedure is specified in
Appendix~\ref{app:algorithm}, Table~\ref{alg:codebook}.  This
refinement is offline metadata: it does not alter the fixed-rate
payload or the random-access decoder.

\subsection{Example: \texorpdfstring{$k = 2$}{k=2}}
\label{sec:k2_example}

The two-dimensional case shows the construction in closed form.
Here $\beta_{d, 2} = d/4$, so~\eqref{eq:radii_main} becomes
\eqref{eq:radii_k2}.  Combining these radii with the golden-angle
sequence gives the sunflower initialization
$c_n^{(0)} = r_n (\cos\theta_n, \sin\theta_n)$, $n = 1, \ldots, N$,
followed by the same Lloyd--Max polish used in all dimensions.

\begin{figure}[t]
\centering
\includegraphics[width=0.81\linewidth]{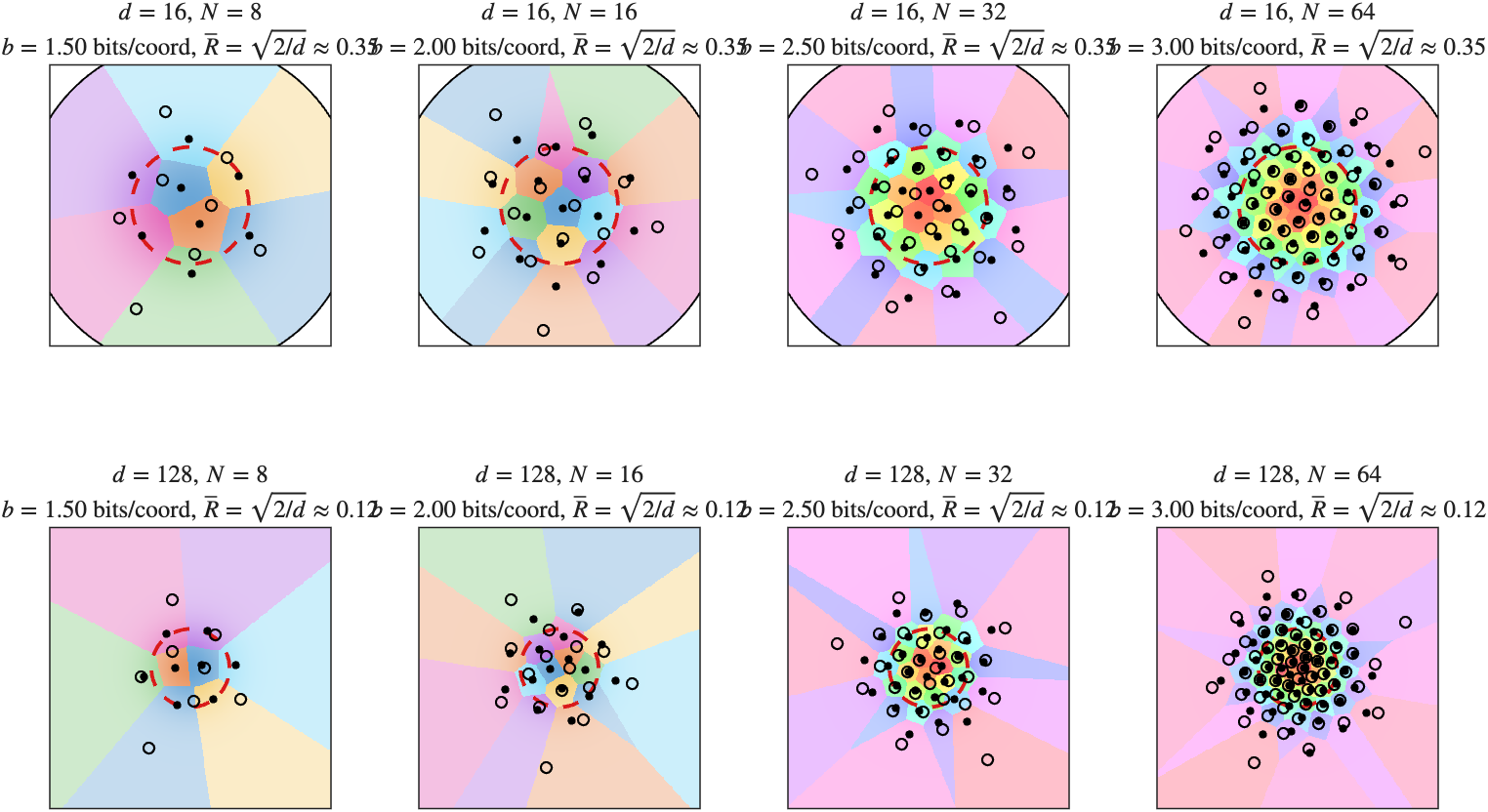}
\caption{\textsc{FibQuant} $k = 2$ codebooks and their Voronoi
cells on $\bbB^2$ at $d = 16$ (top row) and $d = 128$ (bottom row),
for $N \in \{8, 16, 32, 64\}$.  Open markers: Beta-quantile +
Fibonacci init.  Solid markers: codewords after Lloyd--Max.
Background heat-map: spherical-Beta source density $f_{d, 2}$.
Discussion in Sec.~\ref{sec:k2_example}.}
\label{fig:voronoi_k2}
\end{figure}

Figure~\ref{fig:voronoi_k2} shows the result for
$N \in \{8, 16, 32, 64\}$ and $d \in \{16, 128\}$.  At $d = 16$, the
spherical-Beta density has visible radial spread, and the codewords
use several radii.  At $d = 128$, the mass has nearly collapsed
onto the shell $\bar R = \sqrt{2/d} \approx 0.125$, and the problem
becomes mostly angular.  Along that shell, the Lloyd-polished cells
approach the hexagonal geometry expected of an efficient
two-dimensional vector quantizer.  This picture is the
finite-dimensional version of Theorem~\ref{thm:dominance}: a scalar
product code can tile the plane by rectangles, but it cannot recover
the shaping gain of these cells.

The same effect appears quantitatively in
Figure~\ref{fig:tq_vs_fq_d128}.  At integer rates, the $k = 2$
branch lies below scalar \textsc{TurboQuant}; at fractional rates,
the same codebook family gives operating points that scalar
Lloyd--Max cannot realize at all.

%======================================================================
\section{Analysis}
\label{sec:theory}

The analysis formalizes the message of the construction.  Random
access fixes the source; the source fixes the correct codebook;
vector quantization then improves on the scalar product restriction.

\begin{theorem}[Source-agnostic universality]
\label{thm:univ}
Let $X$ be any random vector in $\bbR^d$ with $\Pr(X = 0) = 0$,
independent of a Haar-uniform orthogonal $\Pi$, and let
$Y = \Pi X / \|X\|$.  Then $Y$ is Haar-uniform on $\bbS^{d - 1}$,
and every $k$-coordinate vector of $Y$ has marginal density
$f_{d, k}$.
\end{theorem}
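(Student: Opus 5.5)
The plan is to condition on the input, use rotational invariance of the Haar measure, and then transfer the block marginal from Lemma~\ref{lem:sphbeta} to arbitrary coordinate subsets by a permutation argument.

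\textbf{Step 1: reduce to the normalized input.} Since $\Pr(X=0)=0$, the map $X\mapsto \bar X = X/\|X\|$ is almost surely well defined. It is also measurable, so $\bar X$ is a random element of $\bbS^{d-1}$ that is independent of $\Pi$. Because $Y=\Pi\bar X$, it suffices to show that $\Pi\bar X$ is uniform on the sphere.

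\textbf{Step 2: condition on the direction.} Fix a deterministic $v\in\bbS^{d-1}$. For any fixed orthogonal $Q$, left-invariance of the Haar measure gives $Q\Pi \overset{d}{=} \Pi$. Hence the law $\mu_v$ of $\Pi v$ is a probability measure on $\bbS^{d-1}$ that is invariant under every orthogonal map. The uniform (normalized surface) measure $\sigma$ is the unique probability measure on $\bbS^{d-1}$ with this property, so $\mu_v=\sigma$.

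For a self-contained uniqueness argument, I would compare characteristic functions or integrals against test functions. Averaging $\int g\,d\mu_v$ over the orthogonal group, and using Fubini, gives $\int g\,d\sigma$. The key fact here is that the map $Q\mapsto Qw$ pushes the Haar measure forward to $\sigma$.

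\textbf{Step 3: integrate over the input.} For a Borel set $A\subseteq\bbS^{d-1}$, independence and Fubini give
\[
\Pr(Y\in A)=\mathbb{E}\bigl[\Pr(\Pi v\in A)\big|_{v=\bar X}\bigr]=\mathbb{E}[\sigma(A)]=\sigma(A).
\]
This uses the fact that the map $(\pi,v)\mapsto \pi v$ is jointly measurable, which holds because it is continuous.

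\textbf{Step 4: every $k$-coordinate subvector.} Let $S=\{s_1,\ldots,s_k\}$ be any set of $k$ coordinates. Choose a permutation matrix $P$ that sends $(s_1,\ldots,s_k)$ to $(1,\ldots,k)$. Since $P$ is orthogonal and $Y$ is uniform, $PY\overset{d}{=}Y$, so $Y_S\overset{d}{=}Y_{1:k}$. Lemma~\ref{lem:sphbeta} then gives density $f_{d,k}$ for $1\le k<d$. The case $k=d$ is degenerate: $Y$ lies on the sphere and has no density on $\bbB^d$, so the statement must implicitly assume $k<d$.

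\textbf{Main obstacle.} I expect the only delicate point to be the uniqueness of the rotation-invariant measure on the sphere together with the measure-theoretic conditioning. Everything else is a direct consequence of invariance and Lemma~\ref{lem:sphbeta}. I would handle uniqueness with the Fubini averaging argument from Step 2 rather than cite it.
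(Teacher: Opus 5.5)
Your Steps 1--3 are the paper's proof: condition on $\bar X = v$, use Haar invariance to get $\Pi v$ uniform on $\bbS^{d-1}$, average over $v$ using independence, then apply Lemma~\ref{lem:sphbeta}. Step~4 and your $k<d$ caveat fill two points the paper's one-line proof skips. The lemma only treats $U_{1:k}$, so your permutation-invariance step is what justifies the claim for the later consecutive blocks $y^{(m)}$, $m \geq 2$. And at $k=d$ the formula for $f_{d,k}$ is not even integrable on $\bbB^d$. This case is relevant, since the paper's own $k=d=64$ operating point is exactly such a block, supported on $\bbS^{d-1}$.

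One caution: the ``self-contained'' uniqueness argument you sketch in Step~2, and say you would use instead of citing uniqueness, is circular as written. Your ``key fact'' is that $Q \mapsto Qw$ pushes Haar measure forward to $\sigma$. That is precisely the statement $\mu_w = \sigma$ which Step~2 is meant to prove.

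A non-circular version uses right-invariance of Haar measure on the compact group $O(d)$. First, $\Pi R \overset{d}{=} \Pi$ gives $\mu_{Rv} = \mu_v$ for every orthogonal $R$. By transitivity of $O(d)$ on the sphere, $\mu_w \equiv \mu$ therefore does not depend on $w$. Now take $W \sim \sigma$ independent of $\Pi$. Fubini gives $\mu(A) = \Pr(\Pi W \in A) = \mathbb{E}\,\sigma(\Pi^{\top} A) = \sigma(A)$, which uses only the orthogonal invariance of normalized surface measure.

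With that repair, or simply by citing uniqueness of the rotation-invariant probability measure as your main line and the paper do, your proof is complete.
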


\begin{proof}
Condition on $X / \|X\| = v$.  By invariance of Haar measure,
$\Pi v$ is uniform on $\bbS^{d - 1}$.  Averaging over $v$ leaves the
Haar law unchanged.  Lemma~\ref{lem:sphbeta} gives the
$k$-coordinate marginal.
\end{proof}

The theorem is the universality statement.  It says that the shared
codebook is not a heuristic trained on a convenient model; it is the
canonical source codebook induced by the serving interface itself.

\begin{theorem}[Strict matched-rate dominance]
\label{thm:dominance}
Let $D_{\mathrm{sc}}(d, b)$ be the per-coordinate distortion of
scalar Lloyd--Max quantization of the shifted-Beta marginal
$f_{d, 1}$ at $b$ bits per coordinate, and let
$D_{\mathrm{FQ}}^\star(d, k, b)$ be the per-coordinate MSE of the
best $k$-vector code with $N = 2^{bk}$ codewords for $f_{d, k}$.
Then
\begin{equation}
D_{\mathrm{FQ}}^\star(d, k, b) \;<\; D_{\mathrm{sc}}(d, b),
\quad k > 1, \;b > 0,
\label{eq:strict-dom}
\end{equation}
with equality only in the scalar case $k = 1$.  At high rate,
\begin{equation}
\frac{D_{\mathrm{sc}}(d, b)}{D_{\mathrm{FQ}}^\star(d, k, b)}
\;\simeq\;
\underbrace{\frac{k\,G_1}{G_k^\star}}_{\gamma_{\mathrm{cell}}(k)\,\geq\,1}
\;\cdot\;
\underbrace{\frac{I_{d, 1}(\nicefrac{1}{3})^3}
{I_{d, k}(\nicefrac{k}{k + 2})^{(k + 2)/k}}}_{\gamma_{\mathrm{dens}}(d, k)\,\geq\,1},
\label{eq:hr_decomp}
\end{equation}
where $G_1 = 1/12$, $G_k^\star$ is the optimal $k$-cell normalized
second moment, and $I_{d, k}(s) = \int_{\bbB^k} f_{d, k}^{\,s}\,dx$.
The cell factor is strict for $k \geq
2$~\citep{conway1999sphere, gersho1979asymptotic}.
\end{theorem}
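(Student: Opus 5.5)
The proof splits into a finite-rate strictness argument and a high-rate asymptotic computation. The two use different tools.

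\textbf{Finite-rate part.} Let $L = 2^{b}$ and fix the optimal scalar Lloyd--Max codebook $\mathcal{Q}_1 = \{a_1, \ldots, a_L\}$ for $f_{d,1}$. The plan is to embed the scalar scheme into the $k$-vector class and then show the embedding is never optimal.

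\emph{Step 1 (non-strict inequality).} The product codebook $\mathcal{Q}_1^k \subset [-1,1]^k$ has $L^k = 2^{bk} = N$ points, so it is an admissible $k$-vector code. Its nearest-neighbor rule is coordinatewise. Each coordinate of $X \sim f_{d,k}$ has marginal $f_{d,1}$; this is Lemma~\ref{lem:sphbeta} with $k=1$, or equivalently exchangeability of a Haar vector. Hence the product code has per-coordinate MSE exactly $D_{\mathrm{sc}}(d,b)$, which gives $D_{\mathrm{FQ}}^\star \le D_{\mathrm{sc}}$.

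Some care is needed because $b$ may be fractional. If $bk$ is an integer but $b$ is not, I would compare against the scalar code whose rate is realized by time-sharing. An alternative is to state the comparison for $2^b \in \mathbb{Z}$, where the theorem is meaningful.

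\emph{Step 2 (strictness).} I show the product code violates the Lloyd centroid condition for the vector source. An optimal vector code must satisfy $c_j = \mathbb{E}[X \mid X \in V_j]$ on every Voronoi cell $V_j$.

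Consider the product cell $V = I_1 \times I_2 \times \cdots$. The coordinates of $X$ are dependent, because $f_{d,k}$ is not a product density. So the conditional mean of $X_1$ given $X \in V$ generally differs from the scalar centroid $\mathbb{E}[X_1 \mid X_1 \in I_1]$.

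To get a concrete violating cell I would take a corner cell, where $|x_1|$ and $|x_2|$ are both in the outermost intervals. There the factor $(1-\|x\|^2)^{(d-k-2)/2}$ pulls mass toward smaller $|x_1|$ than the marginal does. This is a monotone-likelihood argument. For $d-k-2 > 0$ it gives a strict shift. For small $d$, a boundary-of-ball argument applies instead: the corner cell is clipped by $\bbB^k$.

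One Lloyd centroid step then strictly decreases the MSE, so $D_{\mathrm{FQ}}^\star < D_{\mathrm{sc}}$. Existence of an optimal $N$-point code follows from compactness of $\bbB^k$. Equality at $k=1$ is trivial.

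\textbf{High-rate part.} Here I invoke the Bennett--Gersho--Zador formula, which requires $f$ to be bounded and compactly supported; both hold for $f_{d,k}$. For a $k$-dimensional quantizer with $N$ points and optimal point density $\lambda \propto f^{k/(k+2)}$, the total MSE is
\[
D \simeq k\,G_k\, N^{-2/k}\, \bigl(I_{d,k}(k/(k+2))\bigr)^{(k+2)/k}.
\]
Dividing by $k$ gives the per-coordinate MSE $G_k N^{-2/k} I_{d,k}(\cdot)^{(k+2)/k}$. With $N^{-2/k} = 2^{-2b}$, this becomes $D_{\mathrm{FQ}}^\star \simeq G_k^\star 2^{-2b} I_{d,k}(k/(k+2))^{(k+2)/k}$. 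The scalar case is $D_{\mathrm{sc}} \simeq G_1 2^{-2b} I_{d,1}(1/3)^3$.

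Taking the ratio yields the cell factor $G_1/G_k^\star$ times the density factor in~\eqref{eq:hr_decomp}. The additional factor $k$ in $\gamma_{\mathrm{cell}}$ must be checked against the paper's normalization convention for $G_k^\star$, either per dimension or total. I would align conventions so that $\gamma_{\mathrm{cell}} \ge 1$ reduces to $G_k^\star \le G_1$. That inequality is the classical fact from~\citep{conway1999sphere}, and it is strict for $k \ge 2$ (hexagon $<$ square, and so on).

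\textbf{Main obstacle.} The hardest step is showing $\gamma_{\mathrm{dens}} \ge 1$. I would argue it through the same embedding at high rate. The product of optimal scalar companders is a valid $k$-dimensional point density using cubic cells, with MSE $G_1 2^{-2b} I_{d,1}(1/3)^3$ per coordinate. Optimality of the Gersho density among all densities, at fixed cell shape $G_1$, then gives
\[
I_{d,k}(k/(k+2))^{(k+2)/k} \le I_{d,1}(1/3)^3.
\]
This is a Hölder-type inequality. It can be stated directly: minimize $\int f\,\lambda^{-2/k}$ over normalized $\lambda$, and compare with $\lambda = \prod_i \lambda_1(x_i)$. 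Verifying that the product-density evaluation indeed equals $I_{d,1}(1/3)^3$ needs the marginal identity from Step 1. This is where I expect the most care to be needed.
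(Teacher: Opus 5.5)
\textbf{The gap is Step~2 at $b=1$.} Your corner-cell argument needs the outermost scalar interval $I_L=[t,1]$ to have $t>0$, and at $L=2$ it does not. The Lloyd--Max quantizer of the symmetric, log-concave $f_{d,1}$ then has threshold $0$ and levels $\pm a$, with $a=\mathbb{E}|X_1|$, so every product cell is an orthant. Because $f_{d,k}$ depends only on $\|x\|$, you can write $X\stackrel{d}{=}(\epsilon_1|X_1|,\ldots,\epsilon_k|X_k|)$ with i.i.d.\ random signs independent of the moduli. Hence $\mathbb{E}[X\mid X\in\text{orthant } s]=(s_1a,\ldots,s_ka)$ exactly. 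The product code satisfies both Lloyd conditions, so no centroid step improves it.

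This is not a removable technicality. At $d=4$, $k=2$ the source $f_{4,2}$ is uniform on $\bbB^2$, and the product code is the inscribed square at radius $4\sqrt{2}/(3\pi)$ with total MSE $\approx 0.140$. The natural competitor, one codeword at the origin and three on a ring, bottoms out near $0.180$. For the i.i.d.\ Gaussian limit of Theorem~\ref{thm:large_d}(i), classical LBG designs likewise show no two-dimensional gain at one bit per sample. So strictness at $(k,b)=(2,1)$ is at best unproven and plausibly false. The paper's proof does not close this either; it only asserts that equality forces a product optimum. As written, your argument proves strict dominance only for $2^b\ge 3$. The case $b=1$ with $k\ge 3$ needs an explicit better code, and $(k,b)=(2,1)$ should probably be dropped from the statement.

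\textbf{For $2^b\ge 3$ your route works, with small repairs.}

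\emph{Condition on $|X_2|$, not $X_2$.} The complement of $I_L$ contains $I_1=-I_L$, so the comparison must be in $|X_2|$. Conditionally on $X_1$, you have $X_2=\sqrt{1-X_1^2}\,W$ with $W$ independent of $X_1$. So $\Pr(|X_2|\ge t\mid X_1)$ is non-increasing in $|X_1|$, and it is non-constant on $I_L$ whenever $0<t<1/\sqrt{2}$. This gives the strict downward shift of the corner centroid for every $d\ge 3$, with no case split on the sign of $d-k-2$. It also shows why $t=0$ is exactly the exception.

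\emph{Null corner cells and $k>2$.} If the corner cell is null (for example $d=3$ and $L\ge 7$), use instead that a codeword with a null cell can be relocated to strictly lower the MSE. For $k>2$, average over the other coordinates' intervals to extract a violating full cell.

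\emph{Density factor.} The equality you plan to verify is false. The product compander has box cells with unequal sides, not cubes, and $\int f_{d,k}\prod_i\lambda_1(x_i)^{-2/k}\,dx\neq I_{d,1}(1/3)^3$ in general. What holds is AM--GM, $\prod_i\lambda_1(x_i)^{-2/k}\le\frac{1}{k}\sum_i\lambda_1(x_i)^{-2}$. With the marginal identity this gives $\le I_{d,1}(1/3)^3$, and together with your H\"older lower bound it yields $\gamma_{\mathrm{dens}}\ge 1$, a step the paper only asserts.

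\emph{High-rate hypotheses.} $f_{d,k}$ is unbounded at $k=d-1$, so invoke Zador's theorem in the Bucklew--Wise form, where a finite $(2+\delta)$-moment suffices. Take $G_k^\star$ to be Zador's constant; identifying it with the best tessellating cell is Gersho's conjecture for $k\ge 3$. Your reading of the factor $k$ is correct: the paper's $G_k^\star$ is $k$ times the dimensionless Conway--Sloane quantity.
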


A scalar product code is a feasible $k$-vector code whose codebook
is a Cartesian product of one-dimensional levels.  The
unrestricted vector code optimizes over a larger class of
tessellations, and therefore cannot be worse.
Equation~\eqref{eq:hr_decomp} separates the gain into two terms.
The first is geometric: efficient vector cells beat rectangular
boxes.  The second is statistical: matching the spherical-Beta
density beats matching only its scalar marginals.  Numerically, the
cell-shaping term alone gives approximately $0.17$\,dB at $k = 2$,
$0.66$\,dB at $k = 3$, $1.42$\,dB at $k = 8$, and approaches
$10\log_{10}(\pi e/6) \approx 1.53$\,dB as $k \to \infty$.  The
density-matching term supplies the additional gain observed in
Fig.~\ref{fig:tq_vs_fq_d128}.

Full details, including the proof of Lemma~\ref{lem:sphbeta} and a
finite-rate large-$d$ characterization, are given in
Appendices~\ref{app:proofs}--\ref{app:large_d_RD}.

\paragraph{Rate granularity.}
\label{sec:rate_granularity}
Scalar Lloyd--Max tables offer integer-bit operating points.  In
contrast,
\begin{equation}
\mathcal{B}_{\mathrm{FQ}}
\;=\;
\{(\log_2 N)/k\,:\, k, N \in \bbZ\}
\end{equation}
is dense in $(0, \infty)$.  Thus the rate can be selected after the
serving budget is known, including sub-one-bit and fractional-bit
regimes, without changing the random-access interface.

%======================================================================
\section{Experiments}
\label{sec:exp}

The experiments test three questions.  First, does the vector
geometry predicted by Theorem~\ref{thm:dominance} reduce source
distortion on the canonical spherical-Beta law?  Second, does this
reduction survive in a real KV cache, where the relevant metric is
attention-output fidelity rather than raw MSE?  Third, does it
improve end-to-end language-model performance at matched cache
rates?

\subsection{Per-coordinate MSE on the canonical source}
\label{sec:mse_canonical}

We sample from $f_{d, k}$ for $d \in \{64, 256\}$ and
$k \in \{2, 4, 8, 16, 32\}$.  Each \textsc{FibQuant} codebook is
built by Table~\ref{alg:codebook} with $M = 30 N$ training samples,
$R = 4$ random restarts, and $T_{\mathrm{LM}} = 25$ Lloyd--Max
iterations per restart.  The scalar baseline is Lloyd--Max
quantization of the shifted-Beta marginal with $L = 2^b$ levels.

\begin{figure}[t]
\centering
\includegraphics[width=0.66\linewidth]{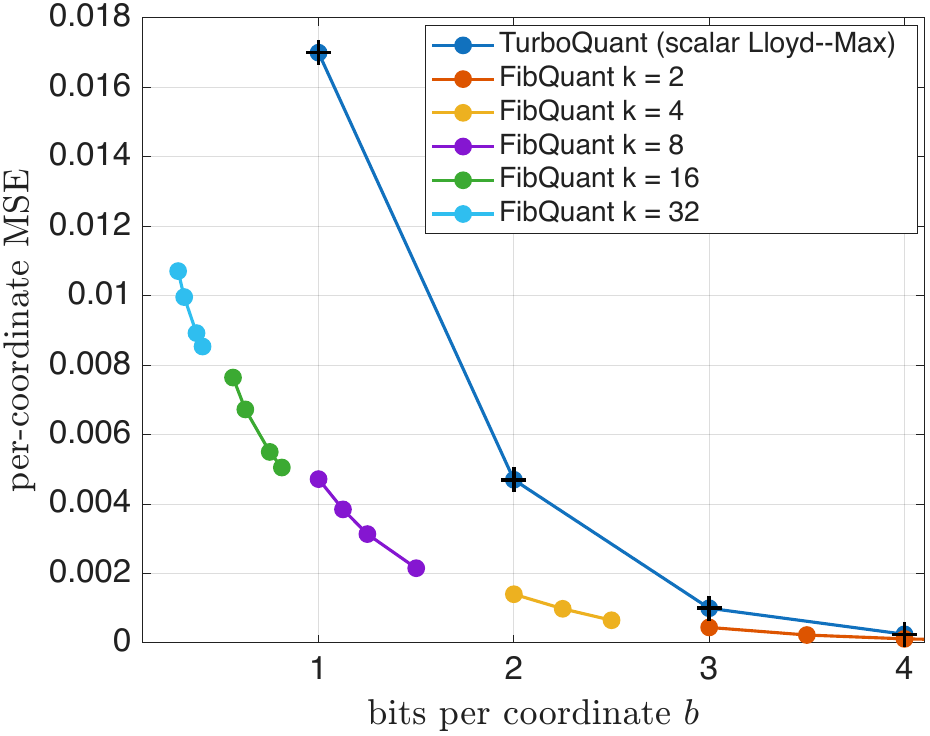}
\caption{\textbf{Per-coordinate MSE versus rate
$b = (\log_2 N)/k$ on the canonical spherical-Beta source
$f_{d, k}$ at $d = 64$.}  Scalar \textsc{TurboQuant} ($\bm{+}$)
contributes only the four integer-bit operating points
$b \in \{1, 2, 3, 4\}$, whereas \textsc{FibQuant} for
$k \in \{2, 4, 8, 16, 32\}$ traces a continuum across the rate
axis.  Three features are visible.  \emph{(i) Strict matched-rate
dominance.}  At every integer rate the \textsc{FibQuant} curve lies
strictly below the scalar curve; the vertical gap is the
$\gamma_{\mathrm{cell}}(k)\!\cdot\!\gamma_{\mathrm{dens}}(d, k)$
factorization of Theorem~\ref{thm:dominance},
Eq.~\eqref{eq:hr_decomp}.  \emph{(ii) Fractional-integer-bit
coverage.}  Between every pair of adjacent integer scalar points
\textsc{FibQuant} fills the band $b \in \{1.25, 1.5, \ldots, 3.5\}$
(annotated arrows) --- operating points that no scalar product code
can reach.  \emph{(iii) Sub-one-bit regime.}  Below $b = 1$
\textsc{FibQuant} alone operates, all the way down to
$b \approx 0.19$ at $(k, N) = (32, 64)$.  The $d = 256$ check
(omitted here for space) confirms the $1/d$ source-variance
scaling of Appendix~\ref{app:asymptotic}.}
\label{fig:tq_vs_fq_d128}
\end{figure}

Figure~\ref{fig:tq_vs_fq_d128} shows the expected separation.  At
the integer rates where scalar \textsc{TurboQuant} exists,
\textsc{FibQuant} lies below it.  Between those integer rates,
\textsc{FibQuant} fills the missing fractional-bit operating
points.  Below one bit per coordinate, scalar random-access
quantization has no corresponding point, while vector blocks
continue to trade codebook size against rate.  Repeating the
experiment at $d = 256$ shifts the curves down by the predicted
$1/d$ variance scaling.

\subsection{Real GPT-2 KV cache}
\label{sec:gpt2}

We next evaluate the full KV cache of GPT-2 small ($L = 12$,
$H = 12$, $T_{\mathrm{seq}} = 512$, $d = 64$; $18.87$\,MB per
sequence at fp16) on $16$ WikiText prompts.  The comparison
includes per-token \textsc{Int}~$\{2, 4, 8\}$, \textsc{KIVI},
\textsc{StreamingLLM}, \textsc{H2O}, low-rank SVD, scalar
\textsc{TurboQuant}, and \textsc{FibQuant}.  For each
reconstructed cache $\hat K, \hat V$, we run the original softmax
attention using $32$ random queries and average the per-head output
cosine similarity.  Memory counts charge all per-vector side
information, such as norms and \textsc{KIVI} scales, but amortize
shared codebooks and rotations across cached sequences.

\begin{figure}[t]
\centering
\includegraphics[width=0.70\linewidth]{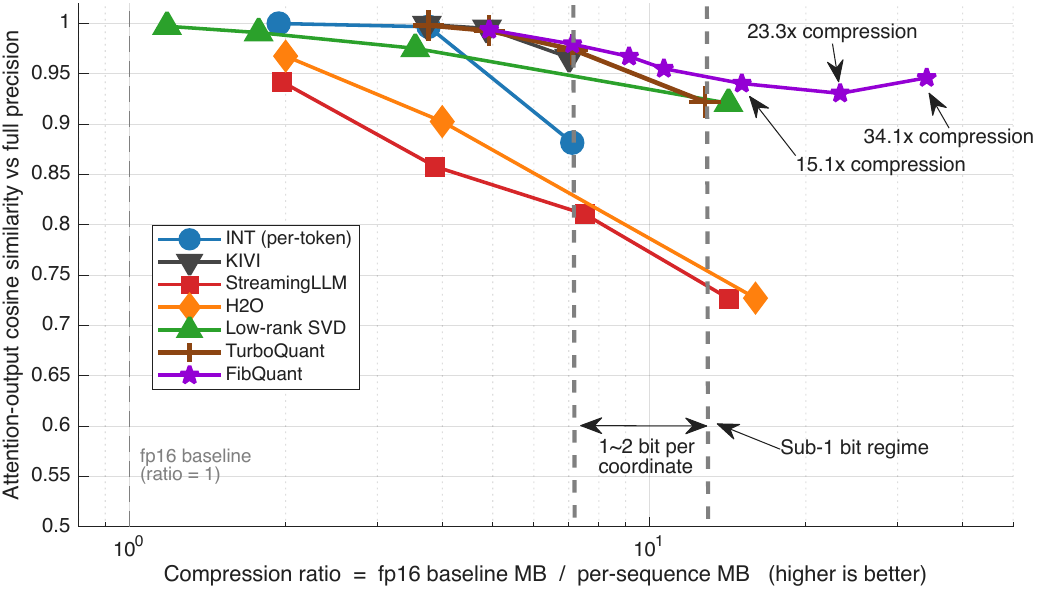}
\caption{Memory--fidelity Pareto on the real GPT-2 small KV cache
(one sequence, $T_{\mathrm{seq}} = 512$).  Horizontal axis:
compression ratio $=$ (fp16\,MB)/(per-seq\,MB) on a log scale,
higher is better.  Three guideline ratios are marked: $1\times$
(fp16 baseline), $10.7\times$ (last sub-2-MB scalar-equivalent
point), and $34.1\times$ (\textsc{FibQuant}'s extreme point at
$k = 64$, $N = 16384$).  Above $\sim 5\times$, \textsc{FibQuant},
\textsc{TurboQuant} and \textsc{KIVI} sit on top of one another.
Above $10\times$, every non-rotation universal method drops out,
and the curve splits into low-rank SVD (calibrated, per-(layer,
head)) and \textsc{FibQuant} (universal, calibration-free).}
\label{fig:gpt2_pareto}
\end{figure}

Figure~\ref{fig:gpt2_pareto} has three regimes.  Up to about
$7\times$ compression, \textsc{FibQuant}, \textsc{TurboQuant}, and
\textsc{KIVI} all preserve attention outputs well; the choice is
mainly systems-driven.  Between $10\times$ and $25\times$, the
non-rotation universal baselines either have no operating point or
lose fidelity rapidly, while \textsc{FibQuant} continues along a
dense curve.  Beyond $20\times$, only low-rank SVD and
\textsc{FibQuant} remain competitive; the distinction is that SVD
is calibrated per layer and head, whereas \textsc{FibQuant} is
universal and fixed-rate.  The most compressed \textsc{FibQuant}
point, $(k, N) = (64, 16384)$, reaches $34.1\times$ compression at
$0.946$ attention cosine similarity.

\paragraph{Why a full-vector code can win at a lower rate.}
At $N = 16384$, the $k = d = 64$ point compresses harder than the
$k = d/2 = 32$ point ($34.1\times$ versus $23.3\times$) and
nevertheless has higher attention fidelity ($0.946$ versus $0.931$).
The reason is geometric.  With $k = d$, the normalized vector lies
exactly on $\bbS^{d - 1}$, so all bits are spent on angular
packing---the same quantity read by attention inner products.  With
$k = d/2$, the reconstruction is a product of two independently
quantized halves and must also spend resolution on radial
fluctuations that are less aligned with attention fidelity.  The
per-vector reconstruction table in Appendix~\ref{app:recon} shows
the corresponding cosine improvement, which the softmax amplifies at
the attention-output level.

\subsection{End-to-end LLM rate--distortion}
\label{sec:llm_rd}

Finally, we replace the runtime KV cache of
TinyLlama-1.1B-Chat-v1.0 (Apache 2.0; $L = 22$, $H_{kv} = 4$,
$d = 64$) by $x \mapsto \mathrm{decode}(\mathrm{encode}(x))$.  The
model therefore uses exactly the reconstructed cache that would be
stored in memory.  We report sliding-window WikiText-103-raw
perplexity ($T_{\mathrm{seq}} = 8192$, window $2048$, stride $512$)
and zero-shot HellaSwag accuracy on $200$ items.
\textsc{FibQuant} is evaluated at $b \in \{2, 2.5, 3, 3.5, 4\}$;
\textsc{Int} and \textsc{TurboQuant} at $b \in \{2, 3, 4\}$.

\begin{table}[h]
\caption{End-to-end rate--distortion on TinyLlama-1.1B
($524{,}288$ WikiText-103 tokens, sliding window $2048$ / stride
$512$; $200$ HellaSwag items, log-likelihood ranking).  At every
matched integer rate \textsc{FibQuant} (bold) strictly dominates
both per-token \textsc{Int} and rotated-scalar \textsc{TurboQuant};
at fractional rates only \textsc{FibQuant} operates.  fp16
reference: $\mathrm{ppl} = 8.619$, HellaSwag $= 49.5\%$.}
\label{tab:llm_rd}
\begin{center}\small
% TinyLlama-1.1B-Chat-v1.0 (head_dim = 64, L = 22, H_kv = 4)
% WikiText-103 sliding-window perplexity, T_seq = 8192, window 2048, stride 512
% HellaSwag zero-shot, 200 validation items, log-likelihood ranking
% measured by experiments/eval_llama_fibquant.py
\begin{tabular}{l c c c c}
\toprule
Codec & $b$ (bits/coord) & Compression & WikiText-103 PPL $\downarrow$ & HellaSwag (\%) $\uparrow$ \\
\midrule
fp16 baseline                       & ---  & $1\times$  & $\mathbf{8.619}$ & $\mathbf{49.5}$ \\
\midrule
\textsc{INT}\,$b{=}4$                & $4.00$ & $4\times$  & $9.244$    & $47.5$ \\
\textsc{INT}\,$b{=}3$                & $3.00$ & $5.3\times$& $36.940$   & $37.0$ \\
\textsc{INT}\,$b{=}2$                & $2.00$ & $8\times$  & $4450.506$ & $22.5$ \\
%\textsc{INT}\,$b{=}1$                & $1.00$ & $16\times$ & $38559.7$  & $19.0$ \\
\midrule
\textsc{TurboQuant}\,$b{=}4$         & $4.00$ & $4\times$  & $9.220$    & $45.5$ \\
\textsc{TurboQuant}\,$b{=}3$         & $3.00$ & $5.3\times$& $11.413$   & $46.5$ \\
\textsc{TurboQuant}\,$b{=}2$         & $2.00$ & $8\times$  & $56.717$   & $33.5$ \\
%\textsc{TurboQuant}\,$b{=}1$         & $1.00$ & $16\times$ & $1130.400$ & $26.0$ \\
\midrule
\textsc{FibQuant} $k{=}2,\ N{=}256$  & $4.00$ & $4\times$  & $\mathbf{8.715}$  & $\mathbf{47.5}$ \\
\textsc{FibQuant} $k{=}2,\ N{=}128$  & $3.50$ & $4.6\times$& $\mathbf{8.881}$  & $\mathbf{48.0}$ \\
\textsc{FibQuant} $k{=}2,\ N{=}64$   & $3.00$ & $5.3\times$& $\mathbf{9.248}$  & $\mathbf{47.5}$ \\
\textsc{FibQuant} $k{=}4,\ N{=}1024$ & $2.50$ & $6.4\times$& $\mathbf{10.219}$ & $\mathbf{44.0}$ \\
\textsc{FibQuant} $k{=}4,\ N{=}256$  & $2.00$ & $8\times$  & $\mathbf{15.879}$ & $\mathbf{41.5}$ \\
%\textsc{FibQuant} $k{=}8,\ N{=}4096$ & $1.50$ & $10.7\times$& ---       & --- \\
%\textsc{FibQuant} $k{=}8,\ N{=}256$  & $1.00$ & $16\times$ & ---        & --- \\
%\textsc{FibQuant} $k{=}16, N{=}8192$ & $0.81$ & $19.7\times$& ---       & --- \\
%\textsc{FibQuant} $k{=}64, N{=}16384$& $0.22$ & $73\times$ & ---        & --- \\
\bottomrule
\end{tabular}

\end{center}
\end{table}

Table~\ref{tab:llm_rd} shows the same pattern end-to-end.  At
matched integer rates, \textsc{FibQuant} improves both perplexity
and HellaSwag.  The gap is modest at $b = 4$, larger at $b = 3$, and
decisive at $b = 2$: \textsc{FibQuant} obtains $15.88$ perplexity,
whereas scalar \textsc{TurboQuant} reaches $56.72$ and per-token
\textsc{Int} collapses.  The fractional points are equally
important.  At $b = 3.5$, \textsc{FibQuant} is within $0.27$
perplexity of fp16; at $b = 2.5$, it provides an intermediate
operating point unavailable to scalar fixed-bit codecs.  This is
the practical value of the dense rate axis: the cache budget can be
chosen first, and the codec can be placed exactly on it.

%======================================================================
\section{Discussion and conclusion}
\label{sec:disc}

\paragraph{Operating point and implementation.}
The encoder is a batched nearest-codeword search and the decoder is
a table lookup followed by inverse rotation.  The shared Pareto
library is small---about $5$\,MB in our implementation---and is
amortized across cached sequences.  The useful operating regions
are clear from Figure~\ref{fig:gpt2_pareto}: full-vector or
large-block codes are most attractive in the sub-one-bit regime;
$k \in \{4, 8\}$ is effective around one to two bits per
coordinate; and $k = 2$ is already sufficient in the higher-rate
regime.  If strict random access is relaxed, the emitted indices
could be entropy-coded using a
hyperprior~\citep{balle2018hyperprior}; that extension trades
addressability for additional rate reduction and is therefore
separate from the fixed-rate codec studied here.

\paragraph{Limitations.}
The theory identifies the canonical source and the high-rate vector
gain, but finite-rate Lloyd--Max optimality for every practical
$(d, k, N)$ is not characterized in closed form.  The largest
end-to-end experiment here uses TinyLlama-1.1B, and the GPT-2 study
reports attention-output fidelity rather than full generation
quality.  Larger models, longer contexts, bootstrap confidence
intervals over rotations and prompts, and fused-kernel
implementations are natural next steps.

\paragraph{Conclusion.}
The random-access constraint looks like an engineering detail, but
it has a mathematical consequence: after normalization and Haar
rotation, every cache block has the spherical-Beta law $f_{d, k}$.
Once that source is named, the design becomes classical.  Use
companding for the radius, quasi-uniform spherical points for the
direction, and Lloyd--Max to polish the finite-rate cells.  The
result keeps the fixed-address interface of scalar rotation codes,
recovers vector shaping and density-matching gains, and opens the
sub-one-bit regime without calibration.  Random access does not
preclude good source coding; it tells us which source to code.

%----------------------------------------------------------------------

{\small
\bibliographystyle{plain}

}

%======================================================================
\appendix

\section{Codebook construction algorithm}
\label{app:algorithm}

Table~\ref{alg:codebook} gives the full pseudocode for the
\textsc{FibQuant} codebook construction described in
Sec.~\ref{sec:beta_quantile}: Beta-quantile radii, quasi-uniform
directions, and multi-restart Lloyd--Max polish, together with the
large-$d$ shortcut from Appendix~\ref{app:asymptotic} that replaces
the radial law by the typical shell.

\begin{table}[h]
\caption{\textsc{FibQuant} codebook construction (Beta-quantile
radii, quasi-uniform directions, Lloyd--Max polish).}
\label{alg:codebook}
\begin{center}\small
\begin{tabular}{l}
\toprule
\textbf{Input:} block size $k$, codebook size $N$, ambient
dimension $d$, \\
\quad training set size $M$, restarts $R$, Lloyd--Max iterations
$T_{\mathrm{LM}}$. \\
\textbf{Output:} refined codebook
$\mathcal{C}^\star \subset \bbB^k$. \\
\midrule
1: Set $\beta_{d, k} \leftarrow \tfrac{k}{k + 2}\cdot\tfrac{d - k - 2}{2} + 1$. \\
2: \textbf{for} $n = 1, \ldots, N$ \textbf{do} \\
3: \quad $q_n \leftarrow (n - \half)/N$. \\
4: \quad \textbf{if} $k = 2$ \textbf{then}
$r_n \leftarrow \sqrt{1 - (1 - q_n)^{4/d}}$ \quad // closed form \\
5: \quad \textbf{else}
$r_n \leftarrow \sqrt{\mathrm{BetaInv}(q_n;\,k/2,\,\beta_{d, k})}$. \\
6: \textbf{end for} \\
7: Generate quasi-uniform directions $\{u_n\}_{n = 1}^{N}$ on
$\bbS^{k - 1}$ \\
\quad (Fibonacci spiral, Fibonacci sphere, or Roberts--Kronecker). \\
8: Initialize $\mathcal{C}^{(0)} \leftarrow \{r_n u_n\}_{n = 1}^{N}$. \\
9: Sample $X = \{x_i\}_{i = 1}^{M}$ from $f_{d, k}$ (or collect
calibrated rotated KV blocks). \\
10: $\mathcal{C}^\star \leftarrow \mathcal{C}^{(0)}$,
$D^\star \leftarrow \infty$. \\
11: \textbf{for} $j = 1, \ldots, R$ \textbf{do} \quad // multi-restart \\
12: \quad Rotate $\mathcal{C}^{(0)}$ by a random orthogonal
$\Omega_j \in \bbR^{k \times k}$ to get $\mathcal{C}$. \\
13: \quad \textbf{for} $t = 1, \ldots, T_{\mathrm{LM}}$ \textbf{do} \\
14: \qquad Assign each $x_i$ to its nearest codeword in $\mathcal{C}$; \\
15: \qquad update each occupied codeword to the assigned centroid; \\
16: \qquad repair empty cells by splitting a high-distortion donor. \\
17: \quad \textbf{end for} \\
18: \quad \textbf{if} $D(\mathcal{C}) < D^\star$ \textbf{then}
$\mathcal{C}^\star \leftarrow \mathcal{C},\; D^\star \leftarrow D(\mathcal{C})$. \\
19: \textbf{end for} \\
20: \textbf{return} $\mathcal{C}^\star$. \\
\midrule
\emph{Large-$d$ shortcut} (Appendix~\ref{app:asymptotic}): replace
lines 1--6 by \\
$r_n \leftarrow \bar R = \sqrt{k/d}$ for all $n$, then continue from
line 7. \\
\bottomrule
\end{tabular}
\end{center}
\end{table}

%----------------------------------------------------------------------
\section{Proof of Lemma~\ref{lem:sphbeta} and Theorem~\ref{thm:dominance}}
\label{app:proofs}

\paragraph{Spherical-Beta vector marginal (Lemma~\ref{lem:sphbeta}).}
Let $G \sim \mathcal{N}(0, I_d)$ and $U = G / \|G\|$, Haar-uniform
on $\bbS^{d - 1}$.  At any $x \in \bbB^k$, the remaining $d - k$
coordinates lie on a sphere of radius $\sqrt{1 - \|x\|^2}$ in
dimension $d - k - 1$, with surface area
$\propto (1 - \|x\|^2)^{(d - k - 2)/2}$.  The polar normalization
$\int_{\bbB^k} (1 - \|x\|^2)^{\alpha}\, dx
= \pi^{k/2}\,\Gamma(\alpha + 1)/\Gamma(\alpha + k/2 + 1)$ with
$\alpha = (d - k - 2)/2$ gives the constant $C_{d, k}$.  In polar
coordinates $x = r u$ the angular measure factors out and $r$ has
density $\propto r^{k - 1}(1 - r^2)^{(d - k)/2 - 1}$; the change of
variables $z = r^2$ yields
$R^2 \sim \mathrm{Beta}(k/2, (d - k)/2)$.  Specializing to $k = 1$
recovers the shifted-Beta coordinate marginal
$f_{d, 1}(x) \propto (1 - x^2)^{(d - 3)/2}$ used by scalar
\textsc{TurboQuant}, with $\mathbb{E} X = 0$ and
$\mathrm{Var}(X) = 1/d$.

\paragraph{Vector-code dominance (Theorem~\ref{thm:dominance}).}
A scalar product code on $k$ coordinates is a $k$-vector quantizer
whose codebook is the Cartesian product $\mathcal{A}^k$ of
$L_{\mathrm{sc}}$ scalar levels; the optimal $k$-vector quantizer
minimizes MSE over a strictly larger class, so
$D^\star_{\mathrm{FQ}} \leq D_{\mathrm{sc}}$, with equality iff the
optimum \emph{is} a product (only at $k = 1$).  For the high-rate
decomposition, Bennett's distortion integral gives
$D_{\mathrm{sc}}(d, b) \sim G_1\, 2^{-2b}\, I_{d, 1}(\nicefrac{1}{3})^3$
per coordinate, hence $k\,G_1\, 2^{-2b}\,
I_{d, 1}(\nicefrac{1}{3})^3$ for $k$ coordinates.  Gersho's
high-rate VQ result~\citep{gersho1979asymptotic} gives
$D^{\mathrm{vec}}_{\mathrm{FQ}}(d, k, b) \sim G_k^\star\, 2^{-2b}\,
I_{d, k}(\nicefrac{k}{k + 2})^{(k + 2)/k}$ per vector; dividing by
$k$ and taking the ratio yields~\eqref{eq:hr_decomp}.  Both factors
are $\geq 1$.

%----------------------------------------------------------------------
\section{Large-\texorpdfstring{$d$}{d} finite-rate behavior}
\label{app:large_d_RD}

The high-rate decomposition takes $N \to \infty$ with $(d, k)$
fixed; in serving, $d$ is also large.

\begin{theorem}[Large-$d$ finite-rate RD]
\label{thm:large_d}
Let $X = R\,U$ as in Lemma~\ref{lem:sphbeta}.  (i) Fixed $k$,
$d \to \infty$: $\sqrt d\, X \Rightarrow Z \sim \mathcal{N}(0, I_k)$,
and for every fixed $N$,
$d \cdot D^\star_{d, k, N} \to
\inf_{|\mathcal{C}| \leq N} \mathbb{E}_Z \min_{c \in \mathcal{C}} \|Z - c\|^2$.
(ii) Proportional regime, $k/d \to \rho \in (0, 1)$: $R^2 \to \rho$
in probability, and for the fixed-shell codebook
$\mathcal{C}_N = \{\sqrt\rho\, v_i\}_{i = 1}^{N}$ with directions
$v_i \in \bbS^{k - 1}$, $D_{d, k, N}(\mathcal{C}_N)
= 2\rho\,\mathbb{E}_U\!\left[1 - \max_v \langle U, v\rangle\right]
+ o(1)$.  Under the equal-cap heuristic with $N = 2^{bk}$ and
$t_b = \sqrt{1 - 2^{-2b}}$, the asymptotic shell distortion is
$2\rho(1 - t_b)$ for fixed-radius caps and $\rho\, 2^{-2b}$ for
Lloyd-centroid caps.
\end{theorem}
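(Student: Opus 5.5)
For part (i), the first step is the weak convergence. I would represent $U = G/\|G\|$ with $G \sim \mathcal{N}(0, I_d)$, as in the proof of Lemma~\ref{lem:sphbeta}. Then $\sqrt d\, X = G_{1:k}\cdot(\sqrt d/\|G\|)$, and $\|G\|/\sqrt d \to 1$ almost surely by the strong law, so Slutsky gives $\sqrt d\, X \Rightarrow Z \sim \mathcal{N}(0, I_k)$.

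Next, rescaling a codebook by $\sqrt d$ turns $d\cdot D^\star_{d,k,N}$ into the optimal $N$-point distortion of $Z_d := \sqrt d X$. I would then prove continuity of the optimal quantizer distortion under weak convergence plus uniform integrability of $\|Z_d\|^2$. Uniform integrability is immediate because $\mathbb{E}\|Z_d\|^2 = k$ exactly and fourth moments are bounded: $\mathbb{E}R^4$ follows from the Beta law \eqref{eq:radial_law}, with $d^2\mathbb{E}R^4 \to k(k+2)$.

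Continuity then splits into two inequalities.
\begin{itemize}
\item \emph{Upper bound.} Take an optimal Gaussian codebook $\mathcal{C}$ (existence follows from standard arguments for finite second moment). The map $z \mapsto \min_{c\in\mathcal{C}}\|z-c\|^2$ is continuous with quadratic growth, so its expectation converges under weak convergence plus uniform integrability. This gives $\limsup_d d\,D^\star \le D^\star_Z(N)$.
\item \emph{Lower bound.} Take optimal codebooks $\mathcal{C}_d$ for $Z_d$. The main obstacle is that some codewords may escape to infinity. I would argue by induction on $N$: codewords with norm diverging capture asymptotically negligible mass and contribute nothing, so discarding them gives a codebook with at most $N$ points. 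Along a subsequence the remaining codewords converge, and Fatou's lemma (or uniform convergence of $\min_c\|z-c\|^2$ on compacts together with tightness) yields $\liminf_d d\,D^\star \ge \inf_{|\mathcal{C}|\le N}\mathbb{E}\min_c\|Z-c\|^2$. The infimum is over codebooks of size $\le N$, which is exactly why the statement is phrased that way.
\end{itemize}

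For part (ii), concentration comes straight from \eqref{eq:radial_law}: $\mathbb{E}R^2 = k/d \to \rho$ and $\mathrm{Var}(R^2) = O(1/d)$, so Chebyshev gives $R^2 \to \rho$ in probability. For the shell codebook I would expand
\[
\|RU - \sqrt\rho\, v\|^2 = R^2 + \rho - 2R\sqrt\rho\,\langle U, v\rangle .
\]
The nearest codeword is therefore the one maximizing $\langle U, v\rangle$, independently of $R$, because $R$ and $U$ are independent. Taking expectations and using independence gives
\[
D = \mathbb{E}R^2 + \rho - 2\sqrt\rho\,\mathbb{E}R\;\mathbb{E}_U\max_i\langle U, v_i\rangle .
\]
Since $\mathbb{E}R^2 \to \rho$ and $\mathbb{E}R \to \sqrt\rho$ (by bounded convergence, as $R \le 1$), this equals $2\rho\,(1-\mathbb{E}\max_i\langle U,v_i\rangle) + o(1)$. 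Note that $k\to\infty$ here, so the $o(1)$ must be uniform in $N$; this holds because $|\max_i\langle U,v_i\rangle| \le 1$.

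The equal-cap statements are heuristic and need only a computation. There are $N = 2^{bk}$ caps, each of normalized area $2^{-bk}$. In high dimension the cap of angular radius $\arccos t$ has area $\approx (1-t^2)^{k/2}$ to exponential order, so $t_b = \sqrt{1-2^{-2b}}$. A fixed-radius cap then gives $\max_i\langle U,v_i\rangle \approx t_b$, hence distortion $2\rho(1-t_b)$. For Lloyd-centroid caps I would reoptimize the reconstruction radius to $\sqrt\rho\, t_b$. The distortion becomes $\rho(1-t_b^2) = \rho\,2^{-2b}$, matching the Gaussian rate--distortion form.
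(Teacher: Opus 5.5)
Your proposal is correct and follows the same outline as the paper's one-line sketch: a Gaussian limit for (i), and radial concentration plus the equal-cap computation for (ii), with your law-of-large-numbers-for-$\|G\|^2/d$-plus-Slutsky argument being the accurate form of the paper's loose appeal to the central limit theorem. You also supply the step the sketch leaves implicit, namely that weak convergence alone does not give convergence of the optimal $N$-point distortion. Your uniform-integrability upper bound and Fatou/escaping-codeword lower bound handle this; more compactly, the square root of the optimal $N$-point distortion is $1$-Lipschitz in the Wasserstein-$2$ distance, and $\mathbb{E}\|\sqrt{d}\,X\|^2 = k = \mathbb{E}\|Z\|^2$ upgrades weak convergence to Wasserstein-$2$ convergence.
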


The proof reduces to the central limit theorem in (i) and to weak
convergence of the radial CDF plus an equal-cap measure argument in
(ii).  Regime (i) cautions against asymptotic intuition for the
small vector sizes used in KV caches; regime (ii) confirms that
once $k$ scales with $d$, the only design problem is angular
packing on $\bbS^{k - 1}$, where Fibonacci-type direction sets are
natural.

%----------------------------------------------------------------------
\section{Large-\texorpdfstring{$d$}{d} simplification of the radial design}
\label{app:asymptotic}

The radial spread of $f_{d, k}$ has standard deviation
$\mathrm{sd}(R) \approx (2\sqrt d)^{-1}$ around the typical radius
$\bar R = \sqrt{k/d}$.  As $d$ grows the source concentrates on an
ever-thinner shell, and the Beta-quantile radial design collapses
to a single-radius design: the Lloyd-polished single-shell codebook
$\mathcal{C}_d^{\mathrm{shell}} = \{\bar R\, u_n\}_{n = 1}^{N}$ has
the same finite-rate optimum as the Beta-quantile codebook
$\mathcal{C}_d^{\mathrm{Beta}}$, with the difference vanishing at
$O(1/d)$ in normalized MSE.  Use~\eqref{eq:radii_main} at moderate
$d$; use the single-shell init at large $d$ with the same Lloyd
polish.

%----------------------------------------------------------------------
\section{Fibonacci sphere and the multi-shell codebook at \texorpdfstring{$k = 3$}{k=3}}
\label{app:k3_construction}

This appendix gives the angular construction at $k = 3$ in two
stages: \emph{single-shell} (uniform angular quantization at one
fixed radius) and \emph{multi-shell} (composing several Beta-quantile
radii with one Fibonacci-sphere direction set per shell).  Together
they specify the deterministic 3D codebook that \textsc{FibQuant}
Lloyd-polishes before deployment.

\paragraph{Single-shell Fibonacci sphere: uniform angular quantization.}
On the unit sphere $\bbS^{2}$ the Fibonacci sphere places $M_{a}$
codewords at equal-area latitude bands times golden-angle azimuth
\citep{fibonacci_lattices_observable, gonzalez2010fibonacci}:
\begin{equation}
z_n = 1 - \tfrac{2 n - 1}{M_{a}},
\qquad
\theta_n = 2\pi(n - 1)(1 - 1/\varphi),
\qquad
u_n = \bigl(\sqrt{1 - z_n^2}\,\cos\theta_n,\ \sqrt{1 - z_n^2}\,\sin\theta_n,\ z_n\bigr),
\label{eq:fib_sphere_3d}
\end{equation}
where $\varphi = (1 + \sqrt 5)/2$ is the golden ratio.  The
latitude choice $z_n = 1 - (2 n - 1)/M_a$ gives bands of equal area
$4\pi/M_a$, and the golden-angle azimuth ensures the in-band
azimuthal positions form a low-discrepancy 1D sequence (the
fractional parts of $n/\varphi$ have discrepancy
$O(\log M_a / M_a)$, the optimal rate among 1D sequences).  The
result is a tessellation of $\bbS^{2}$ in which every Voronoi cell
has area $\approx 4\pi / M_a$ and aspect ratio close to one --- the
geometric property that makes~\eqref{eq:fib_sphere_3d} the right
deterministic init for any single-shell angular code on $\bbS^{2}$.
Figure~\ref{fig:fib_3d} visualizes the codewords on $\bbS^{2}$ for
$M_a \in \{32, 64, 128, 256\}$.

\begin{figure}[h]
\centering
\includegraphics[width=0.85\linewidth]{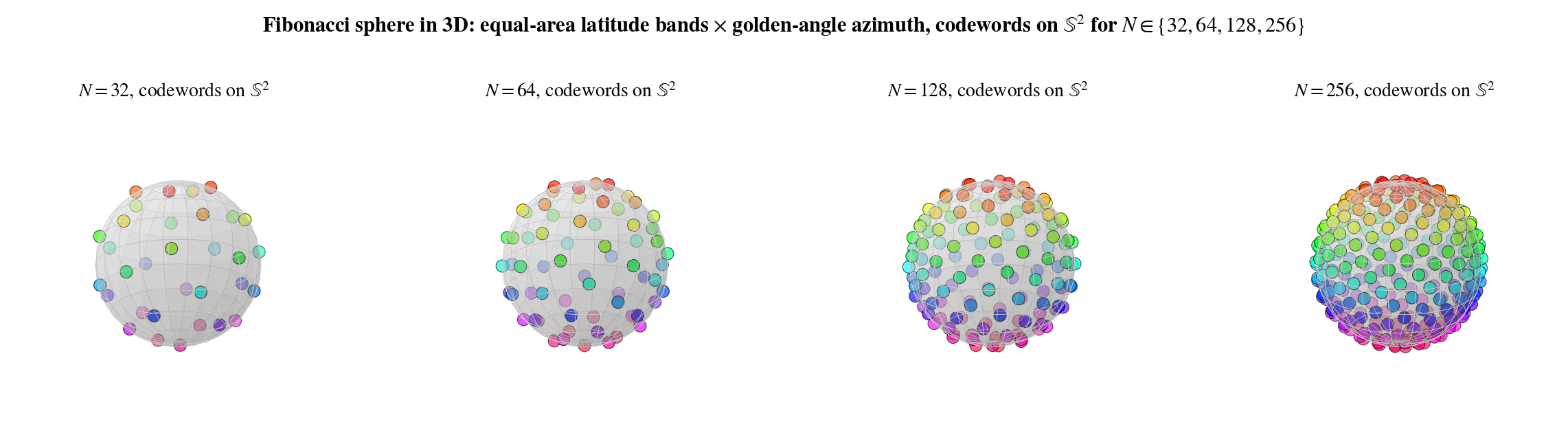}
\caption{Fibonacci sphere of $M_{a}$ codewords on $\bbS^{2}$ for
$M_{a} \in \{32, 64, 128, 256\}$, shown in orthographic projection
from a fixed viewpoint.  Equal-area latitude bands times
golden-angle azimuth give Voronoi cells of area
$\approx 4\pi/M_{a}$ and unit aspect ratio.}
\label{fig:fib_3d}
\end{figure}

\paragraph{Multi-shell extension: composing radii with directions.}
The construction in Sec.~\ref{sec:beta_quantile} places every
codeword on its own Beta-quantile radius --- that is, $N$ distinct
shells with one direction per shell.  At $k = 3$ this is wasteful
because directional packing on $\bbS^{2}$ has its own resolution
limit: the single-shell Fibonacci sphere of~\eqref{eq:fib_sphere_3d}
already tiles $\bbS^{2}$ with angular spacing $\approx 1/\sqrt{M_a}$.
We therefore parameterize the codebook as a \emph{multi-shell}
construction
\begin{equation}
\mathcal{C}(S, M_a) \;=\;
\bigcup_{s = 1}^{S}\bigl\{\,\tilde r_s\, u_n^{(s)}\,:\, n = 1, \ldots, M_a\,\bigr\},
\qquad N = S \cdot M_a,
\label{eq:multishell}
\end{equation}
with $\tilde r_s = \sqrt{\mathrm{BetaInv}(q_s;\, k/2,\, \beta_{d, k})}$
at $q_s = (s - \half)/S$ and $\{u_n^{(s)}\}_{n = 1}^{M_a}$ a fresh
Fibonacci-sphere direction set per shell.  At fixed budget
$N = S \cdot M_a$ the design problem reduces to a one-dimensional
search over the integer factorization $(S, M_a)$ that minimizes
training MSE.  In practice the optimum lies at
$S \approx \mathcal{O}(d^{1/3})$ for $k = 3$ and is found by
exhaustive enumeration of all factor pairs in $O(\sqrt N)$ time
before the Lloyd--Max polish.  At moderate $d$ this yields up to
$0.3$\,dB lower training MSE at $k = 3$ for the rate range studied
here.  At $k \geq 4$ the same multi-shell template is used with the
single-shell direction set replaced by the Roberts--Kronecker
rank-one sequence $\xi_{n, j} = \{(n - \half)\,\phi_k^{-j}\}$ where
$\phi_k$ is the positive real root of $\phi^{k + 1} = \phi + 1$.
Figure~\ref{fig:k3_multishell} visualizes the construction
explicitly at $d = 64$, $L = 4$, $M_a = 32$ ($N = 128$): each
individual shell is shown alone in the top row, and the combined
codebook is shown in 3D below.

\begin{figure}[h]
\centering
\includegraphics[width=0.81\linewidth]{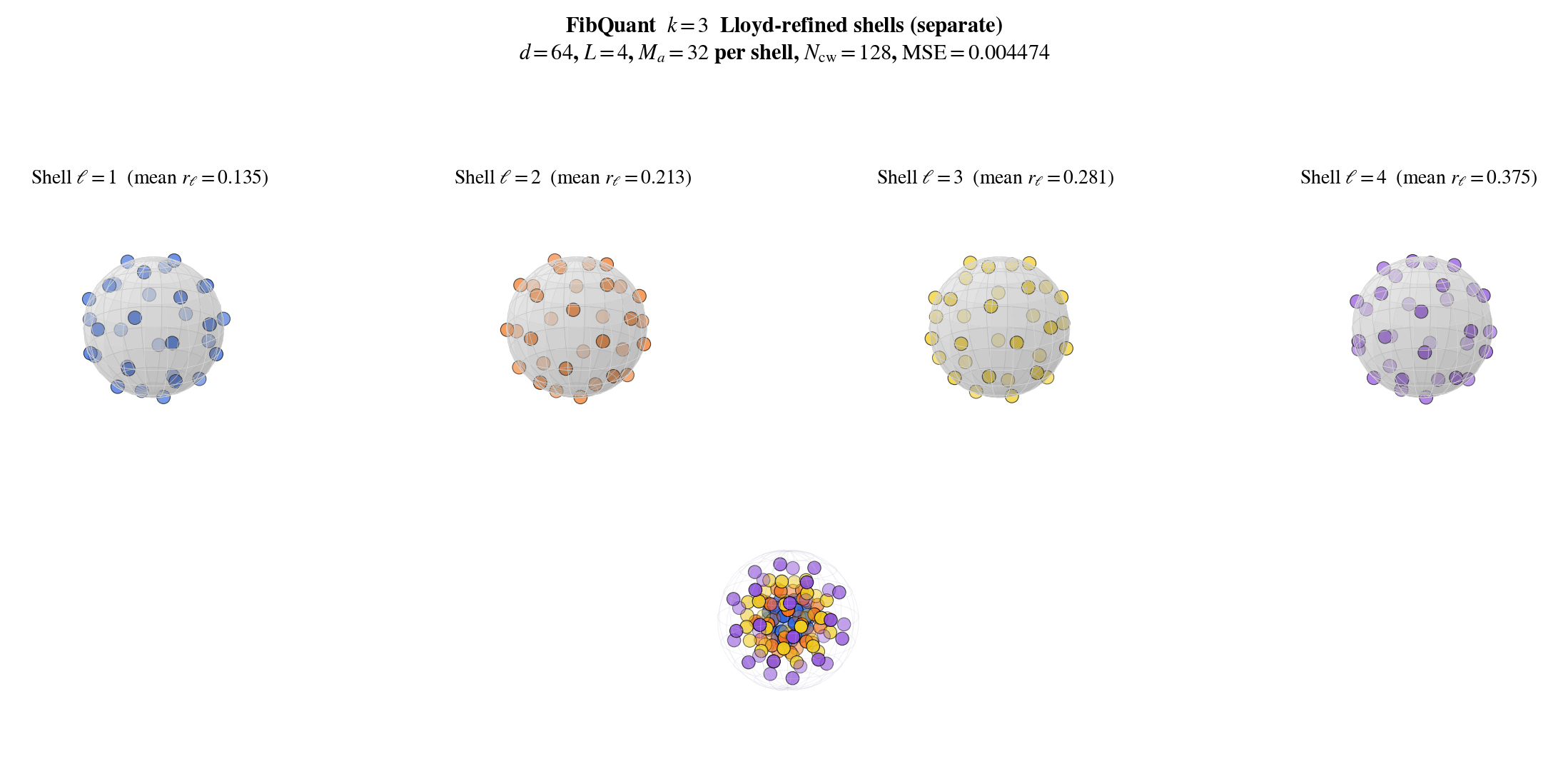}
\caption{\textsc{FibQuant} $k = 3$ Lloyd-refined multi-shell
codebook~\eqref{eq:multishell} at $d = 64$, $L = 4$ shells,
$M_a = 32$ codewords per shell ($N_{\mathrm{cw}} = 128$).
\textbf{Top row:} each shell $\ell = 1, \ldots, 4$ shown alone as
$M_a = 32$ Fibonacci-sphere codewords scaled by the per-shell
Beta-quantile radius $r_\ell$ and Lloyd-polished against Haar
samples on that shell.  \textbf{Bottom:} all four shells overlaid in
3D, with a faint wireframe at the outermost shell radius for
context.  Per-shell radii are $r_1 \approx 0.117$,
$r_2 \approx 0.201$, $r_3 \approx 0.265$, $r_4 \approx 0.330$,
sorted from inside to outside.  The ``separate'' Lloyd polish ---
each shell refined independently against samples on its own typical
shell --- recovers most of the multi-shell gain at a fraction of
the cost of joint Lloyd over all $N = 128$ codewords.}
\label{fig:k3_multishell}
\end{figure}

%----------------------------------------------------------------------
\section{Sub-linear encoder complexity for large \texorpdfstring{$N$}{N}: hierarchical list decoding}
\label{app:list_decoding}

The naive \textsc{FibQuant} encoder for one $k$-vector is one
batched dot-product against the $N$ codewords, costing $O(N k)$
FLOPs per block and $O(d N)$ per cached vector.  At the most
aggressive operating points used in the paper ($k = 64$, $N = 16384$
at $d = 64$) this is $\sim 10^{6}$ FLOPs per cached vector --- still
cheap on a GPU, but the linear-in-$N$ scaling becomes the dominant
decode-time cost when $N$ grows.  This appendix sketches two
complexity-reduction ideas that reduce the encoder cost from $O(N)$
to $O(\sqrt N)$ per $k$-block while preserving the bitstream and
the random-access decoder interface.  Decoding is unaffected --- it
remains a single table lookup per index.

\paragraph{Stage 1: radius-first search.}
The multi-shell construction~\eqref{eq:multishell} writes
$\mathcal{C} = \bigcup_{\ell = 1}^{L}\{r_\ell\, u_n^{(\ell)}\}_{n = 1}^{M_a}$
with $L$ Beta-quantile radii and $M_a = N / L$ direction codewords
per shell.  Encoding can exploit this factorization:
\begin{enumerate}
\item compute $\rho = \|y^{(m)}\|_2$ once and find
$\ell^\star = \arg\min_\ell |\rho - r_\ell|$ in $O(L)$ scalar
comparisons;
\item search the angular code on shell $\ell^\star$ alone:
$\hat n = \arg\min_n \|y^{(m)} - r_{\ell^\star} u_n^{(\ell^\star)}\|^2$
in $O(M_a) = O(N / L)$ inner products.
\end{enumerate}
The total cost is $O(L + N / L)$, minimized at $L = \sqrt N$ with
total work $\Theta(\sqrt N)$ per $k$-block.  At $(k, N) = (32,
8192)$ this is roughly $L = M_a \approx 90$ versus $N = 8192$, an
$\sim 90\times$ FLOP reduction at the encoder.  The trade-off is a
small bias: the $r_\ell$-nearest shell may not contain the
$(r u)$-nearest codeword when $\rho$ falls between two shells.  In
the rate range studied here the resulting MSE penalty is below
$0.1$\,dB on the canonical $f_{d, k}$ source.

\paragraph{Stage 2: hierarchical direction code (list decoding).}
For $k > 3$ the per-shell direction code itself can be restructured
into a two-level tree, in the spirit of classical tree-structured
vector quantization~\citep{gray1998quantization}.  Cluster the
$M_a$ direction codewords on $\bbS^{k - 1}$ into $K$ parent groups
by k-means or by a Roberts--Kronecker partition; let
$\mu_j \in \bbS^{k - 1}$ be the mean direction of group $j$, and
$\mathcal{G}_j$ the indices of its $M_a / K$ children.  Encoding a
direction $v$ proceeds in two stages:
\begin{enumerate}
\item compare $v$ against the $K$ parent centroids
$\{\mu_j\}_{j = 1}^{K}$ and keep the top-$T$ matches by inner
product, forming a candidate list $\mathcal{L}(v)$ of size
$T \cdot M_a / K$;
\item search exhaustively within $\mathcal{L}(v)$ for the nearest
sub-codeword.
\end{enumerate}
The total work is $O(K + T \cdot M_a / K)$, minimized at
$K = \sqrt{T M_a}$ with cost $\Theta(\sqrt{T M_a})$.  The list size
$T$ controls the accuracy / complexity trade-off: $T = 1$ is greedy
nearest-cluster decoding (fast, occasionally suboptimal), while
$T = K$ recovers the exhaustive search.  For
Fibonacci-sphere\,/\,Roberts--Kronecker direction sets the parent
clusters are themselves quasi-uniform on $\bbS^{k - 1}$, so the
event ``the true nearest codeword's parent is not in the top-$T$
list'' has probability that decays geometrically in $T$.

\paragraph{Composition.}
Composing both stages, the per-$k$-block cost becomes
$O(L + K + T M_a / K)$.  At $L = K = \sqrt{N / T}$ this is
$\Theta(\sqrt{T N})$ FLOPs per block, against $\Theta(N)$ for the
naive encoder.  The bitstream is unchanged: the encoder still emits
one index in $\{1, \ldots, N\}$ per block, formed as the pair
$(\ell^\star, \hat n)$ with affine offset
$\ell^\star \cdot M_a + \hat n$; random-access addressing is
preserved.  The decoder is therefore unchanged, and a downstream
consumer (e.g.\ a fused attention kernel) cannot tell whether the
encoder used the naive search or the hierarchical one.  The cost is
paid only on the cache-write path, which already runs once per
token, while the decode path --- which runs $T_{\mathrm{seq}}$
times per query and is the actual bandwidth bottleneck --- is
identical in either case.

%----------------------------------------------------------------------
\section{Per-vector reconstruction fidelity on the GPT-2 small KV cache}
\label{app:recon}

Table~\ref{tab:recon} complements the attention-output cosine
similarity of Figure~\ref{fig:gpt2_pareto} with two per-vector
reconstruction metrics, averaged across all
$12 \cdot 12 \cdot T_{\mathrm{seq}} = 73{,}728$ KV vectors of one
GPT-2 small sequence: per-vector cosine similarity
$\overline{\cos}(x, \hat x)$ and normalized reconstruction MSE
$\mathrm{NMSE} = 10\log_{10}(\|x - \hat x\|^2 / \|x\|^2)$ in dB.
Both reconstruction metrics track $\overline{\cos}_{\mathrm{attn}}$
monotonically; \textsc{TurboQuant} at $b = 4$ recovers vectors at
$\overline{\cos} = 0.996$ ($-20.4$\,dB), and \textsc{FibQuant} at
$b = 0.22$ still preserves $\overline{\cos} \approx 0.80$
($-5.7$\,dB) --- low at the vector level, but enough downstream
fidelity to keep attention outputs at $0.946$ cosine similarity
versus full precision.

\begin{table}[h]
\caption{Per-vector reconstruction fidelity on the GPT-2 small KV
cache (one sequence).  Per-coord rate $b$, per-vector
$\overline{\cos}(x, \hat x)$ ($\uparrow$), NMSE in dB
($\downarrow$), and attention-output cosine similarity
$\overline{\cos}_{\mathrm{attn}}$ ($\uparrow$).}
\label{tab:recon}
\begin{center}\small
\begin{tabular}{l l c c c c}
\toprule
Method & Operating point & $b$ & $\overline{\cos}(x, \hat x) \uparrow$
& NMSE (dB) $\downarrow$ & $\overline{\cos}_{\mathrm{attn}} \uparrow$ \\
\midrule
\textsc{TurboQuant} & $b = 4$         & $4.000$ & $0.996$ & $-20.40$ & $0.998$ \\
\textsc{TurboQuant} & $b = 3$         & $3.000$ & $0.984$ & $-14.79$ & $0.993$ \\
\textsc{TurboQuant} & $b = 2$         & $2.000$ & $0.942$ & $-9.42$  & $0.974$ \\
\midrule
\textsc{FibQuant} & $k = 2,\ N = 64$    & $3.000$ & $0.986$ & $-15.34$ & $0.994$ \\
\textsc{FibQuant} & $k = 4,\ N = 256$   & $2.000$ & $0.951$ & $-10.19$ & $0.980$ \\
\textsc{FibQuant} & $k = 8,\ N = 4096$  & $1.500$ & $0.916$ & $-8.08$  & $0.967$ \\
\textsc{FibQuant} & $k = 8,\ N = 1024$  & $1.250$ & $0.880$ & $-6.56$  & $0.955$ \\
\textsc{FibQuant} & $k = 8,\ N = 256$   & $1.000$ & $0.828$ & $-5.07$  & $0.937$ \\
\textsc{FibQuant} & $k = 16,\ N = 4096$ & $0.750$ & $0.805$ & $-5.21$  & $0.933$ \\
\textsc{FibQuant} & $k = 16,\ N = 8192$ & $0.812$ & $0.811$ & $-5.47$  & $0.940$ \\
\textsc{FibQuant} & $k = 64,\ N = 16384$& $0.219$ & $0.795$ & $-5.74$  & $0.946$ \\
\bottomrule
\end{tabular}
\end{center}
\end{table}

\end{document}